\algrenewcommand{\algorithmiccomment}[1]{\hskip3em// \textit{#1}}
\DeclareMathOperator*{\argmin}{arg\,min}
\newcommand{\norm}[1]{\left\lVert#1\right\rVert}
\newtheorem{definition}{Definition}[section]
\newtheorem{theorem}{Theorem}[section]
\newtheorem{lemma}{Lemma}[section]
\newtheorem*{remark}{Remark}
\newacronym{DR}{dr}{\textit{distribution regression}}
\newacronym{KES}{kes}{}
\newacronym{SES}{ses}{}
\begin{document}

%

%
\runningauthor{Lemercier, Salvi, Damoulas, Bonilla, Lyons}

\twocolumn[

\aistatstitle{Distribution Regression for Sequential Data}

  \aistatsauthor{ Maud Lemercier$^1$, Cristopher Salvi$^2$, Theodoros Damoulas$^1$, Edwin V.~Bonilla$^3$, and Terry Lyons$^2$}

  \aistatsaddress{\\ $^1$University of Warwick \& Alan Turing Institute  \\  \{\tt maud.lemercier, t.damoulas\}@warwick.ac.uk \\ $^2$University of Oxford \& Alan Turing  Institute \\  \{\tt salvi, lyons\}@maths.ox.ac.uk \\ $^3$CSIRO's Data61, \tt edwin.bonilla@data61.csiro.au} 
]

\begin{abstract}
Distribution regression refers to the supervised learning problem where labels are only available for groups of inputs instead of individual inputs. In this paper, we develop a rigorous mathematical framework for distribution regression where inputs are complex data streams.  Leveraging properties of the \textit{expected signature} and a recent \textit{signature kernel trick} for sequential data from stochastic analysis, we introduce two new learning techniques, one feature-based and the other kernel-based. Each is suited to a different data regime in terms of the number of data streams and the dimensionality of the individual streams. We provide theoretical results on the universality of both approaches and demonstrate empirically their robustness to irregularly sampled multivariate time-series, achieving state-of-the-art performance on both synthetic and real-world examples from thermodynamics, mathematical finance and agricultural science.
\end{abstract}

\section{INTRODUCTION}
\Gls{DR} on sequential data describes the task of learning a function from a group of data streams to a single scalar target. For instance, in thermodynamics (\Cref{fig:ideal_gas}) one may be interested in determining the temperature of a gas from the set of trajectories described by its particles \citep{hill1986introduction,reichl1999modern,schrodinger1989statistical}. Similarly in quantitative finance practitioners may wish to estimate mean-reversion parameters from observed market dynamics \citep{papavasiliou2011parameter, gatheral2018volatility, balvers2000mean}. Another example arises in agricultural science where the challenge consists in predicting the overall end-of-year crop yield from high-resolution climatic data across a field \citep{panda2010application, dahikar2014agricultural, you2017deep}. 

\gls{DR} techniques have been successfully applied to handle situations where the inputs in each group are vectors in $\mathbb{R}^d$. Recently, there has been an increased interest in extending these techniques to non-standard inputs such as images \citep{law2017bayesian} or persistence diagrams \citep{kusano2016persistence}. However \gls{DR} for sequential data, such as multivariate time-series, has been largely ignored. The main challenges in this direction are the non-exchangeability of the points in a sequence, which naturally come with an order, and the fact that in many real world secenarios the points in a sequence are irregularly distributed across time. 

\begin{figure*}[ht]
\centering
\includegraphics[width=0.9\textwidth,trim={0 4.1cm 0 0},clip]{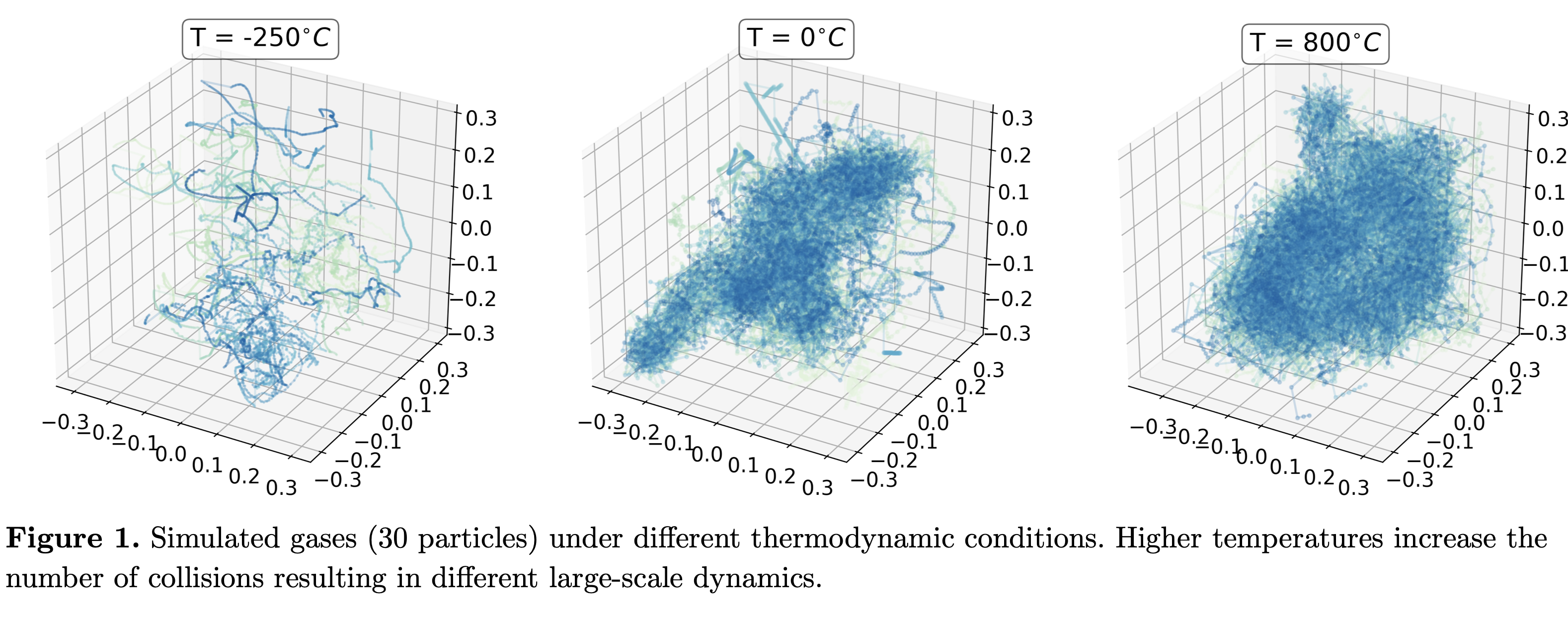}

\caption{Simulation of the trajectories traced by $20$ particles of an ideal gas in a $3$-d box under different thermodynamic conditions. Higher temperatures equate to a higher internal energy in the system which increases the number of collisions resulting in different large-scale dynamics of the gas.}
\label{fig:ideal_gas}
\end{figure*}

In this paper we propose a framework for \acrshort{DR} that addresses precisely the setting where the inputs within each group are complex data streams, mathematically thought of as \textit{Lipschitz continuous paths} (\Cref{sec:background}). We formulate two distinct approaches, one feature-based and the other kernel-based, both relying on a recent tool from stochastic analysis known as the \textit{expected signature} \citep{chevyrev2018signature, chevyrev2016characteristic, lyons2015expected,ni2012expected}. 
Firstly, we construct a new set of features that are universal in the sense that any continuous function on distributions on paths can be uniformly well-approximated by a linear combination of these features (\Cref{sec:pwsig}). Secondly, we introduce a universal kernel on distributions on paths given by the composition of the expected signature and a Gaussian kernel (\Cref{sec:ksig}), which can be evaluated with a kernel trick. The former method is more suitable to datasets containing a large number of low dimensional streams, whilst the latter is better for datasets with a low number of high dimensional streams. We demonstrate the versatility of our methods to handle interacting trajectories like the ones in \Cref{fig:ideal_gas}. We show how these two methods can 
used to provide practical \acrshort{DR} algorithms for time-series, which are robust to irregular sampling and achieve state-of-the-art performance on synthetic and real-world \gls{DR} examples (\Cref{sec:experiments}).

\subsection{Problem definition}\label{sec:pb_def}

Consider $M$ input-output pairs $\{(\{\mathbf{x}^{i,p}\}_{p=1}^{N_i}, y^i)\}_{i=1}^{M}$, where each pair is given by a scalar target $y^i\in\mathbb{R}$ and a group of $N_i$ $d$-dimensional time-series of the form
\begin{align}
    \mathbf{x}^{i,p}=\{(t_1,\mathbf{x}^{i,p}_{1}), \ldots, (t_{\ell_{i,p}},\mathbf{x}^{i,p}_{{\ell_{i,p}}})\},
\end{align}

of possibly unequal lengths $\ell_{i,p}\in\mathbb{N}$, with time-stamps $t_1$ {\small$\leq$} $\ldots$ {\small$\leq$}  $t_{\ell_{i,p}}$ and values $\mathbf{x}^{i,p}_k\in\mathbb{R}^d$. Every d-dimensional time-series $\mathbf{x}^{i,p}$ can be naturally embedded into a Lipschitz-continuous path 
\begin{align}
    x^{i,p}:[t_1,t_{l_{i,p}}] \to \mathbb{R}^d,
\end{align} by piecewise linear interpolation with knots at $t_1,\ldots, t_{l_{i,p}}$ such that $x^{i,p}_{t_k} =  \mathbf{x}^{i,p}_k$. After having formally introduced a set of probability measures on this class of paths, we will summarize the information on each set $\{x^{i,p}\}_{p=1}^{N_i}$ by the \textit{empirical measure} $\delta^i=\frac{1}{N_i}\sum_{p=1}^{N_i}\delta_{x^{i,p}}$ where $\delta_{x^{i,p}}$ is the \textit{Dirac measure} centred at the path $x^{i,p}$. The supervised learning problem we propose to solve consists in learning an unknown function $F:\delta^i\mapsto y^i$. 

\section{THEORETICAL BACKGROUND}\label{sec:background}

We begin by formally introducing the class of paths and the set of probability measures we are considering. 

\subsection{Paths and probability measures on paths}

Let $0$ {\small$\leq$} $a$ {\small$<$} $T$ and $I=[a,T]$ be a closed time interval. Let $E$ be a Banach space of dimension $d \in \mathbb{N}$ (possibly infinite) with norm $\norm{\cdot}_E$. For applications we will take $E:=\mathbb{R}^d$. We denote by $\mathcal{C}(I, E)$ the Banach space \citep{friz2010multidimensional} of Lipschitz-continuous functions $x:I \to E$ equipped with the norm
\begin{equation}
    \norm{x}_{Lip} = \norm{x_a} + \sup_{s,t \in I}\frac{\norm{x_t - x_s}}{|t-s|}.
\end{equation}

We will refer to any element $x \in \mathcal{C}(I,E)$ as an $E$-valued \textit{path}.\footnote{For technical reasons, we remove from $\mathcal{C}(I,E)$ a subset of pathological paths called \textit{tree-like} \citep[Sec.~2.3][]{fermanian2019embedding,hambly2010uniqueness}. This removal has no theoretical or practical impact on what follows.} Given a compact subset of paths $\mathcal{X} \subset \mathcal{C}(I,E)$, we denote by $\mathcal{P}(\mathcal{X})$ the set of (Borel) \textit{probability measures} on $\mathcal{X}$.

The signature has been shown to be an ideal feature map for paths \citep{lyons2014rough}. Analogously, the expected signature is an appropriate feature map for probability measures on paths. Both feature maps take values in the same feature space. In the next section we introduce the necessary mathematical background to describe the structure of this space.

\subsection{A canonical Hilbert space of tensors $\mathcal{T}(E)$}

In what follows $\oplus$ and $\otimes$ will denote the direct sum and the tensor product of vector spaces respectively. For example, $(\mathbb{R}^d)^{\otimes 2} = \mathbb{R}^d \otimes \mathbb{R}^d$ is the space of $d \times d$ matrices and $(\mathbb{R}^d)^{\otimes 3}$ is the space of $d \times d \times d$ tensors. By convention $E^{\otimes 0}=\mathbb{R}$. The following vector space will play a central role in this paper
\begin{equation}
    \mathcal{T}(E) = \bigoplus_{k=0}^\infty E^{\otimes k} = \mathbb{R} \oplus E \oplus E^{\otimes 2} \oplus \ldots 
\end{equation}

If $\{e_1, \ldots , e_d\}$ is a basis of $E$, the elements
$\{e_{i_1} \otimes \ldots \otimes e_{i_k}\}_{(i_1, \ldots , i_k) \in \{1 , \ldots, d\}^k}$ form a basis of $E^{\otimes k}$. For any $A \in \mathcal{T}(E)$ we denote by $A_k \in E^{\otimes k}$ the $k$-tensor component of $A$ and by $A^{(i_1,\ldots,i_k)} \in \mathbb{R}$ its $(i_1\ldots i_k)^{th}$ coefficient. If $E$ is a Hilbert space with inner product $\langle \cdot, \cdot \rangle_E$, then there exists a canonical inner product $\langle \cdot, \cdot \rangle_{E^{\otimes k}}$ on each $E^{\otimes k}$ which extends by linearity to an inner product
\begin{equation}
    \langle A, B \rangle_{\mathcal{T}(E)} = \sum_{k \geq 0} \langle A_k, B_k \rangle_{E^{\otimes k}}
\end{equation}

on $\mathcal{T}(E)$ that thus becomes also a Hilbert space \cite[Sec. 3]{chevyrev2018signature}.

\subsection{The Signature of a path}
\label{sec:path-signature}
The \textit{signature} \citep{chen1957integration, lyons1998differential, lyons2014rough} turns the complex structure of a path $x$ into a simpler vectorial representation given by an infinite sequence of iterated integrals. In this paper, the iterated integrals are defined in the classical \textit{Riemann-Stieltjes} sense.
\begin{definition}
The \textit{signature} $S : \mathcal{C}(I,E) \to \mathcal{T}(E)$ is the map defined elementwise in the following way: the $0^{th}$ coefficient is always $S(x)^{(0)}=1$, whilst all the others are defined as
\begin{equation}\label{eqn:sig}
S(x)^{(i_1\ldots i_k)} =  \underset{a<u_1<\ldots<u_k<T}{\int \ldots \int} d x^{(i_1)}_{u_1} \ldots dx^{(i_k)}_{u_k}\in\mathbb{R}, 
\end{equation}
where $t \mapsto x^{(i)}_t$ denotes the $i^{th}$ path-coordinate of $x$. 
\end{definition}

It is well known that any continuous function on a compact subset of $\mathbb{R}^d$ can be uniformly well approximated by polynomials \cite[Thm.~8.1]{conway2019course}. In full analogy, the collection of iterated integrals defined by the signature provides a basis for continuous functions on compact sets of paths as stated in the following result \cite[Thm. 2.3.5]{fermanian2019embedding}.

\begin{theorem}\label{thm:SW-paths}
Let $\mathcal{X} \subset \mathcal{C}(I, E)$ be a compact set of paths and consider a continuous function $f:\mathcal{X} \to \mathbb{R}$. Then for any $\epsilon > 0$ there exists a truncation level $n\geq0$ such that for any path $x \in \mathcal{X}$
\begin{equation}
    \Big|f(x) - \sum_{k=0}^n\sum_{J \in \{1,\ldots,d\}^k} \alpha_{J}S(x)^{J} \Big|<\epsilon, 
\end{equation}

where $\alpha_J \in \mathbb{R}$ are scalar coefficients.
\end{theorem}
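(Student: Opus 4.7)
The plan is to recognise the candidate approximants as a subalgebra of the continuous real-valued functions on $\mathcal{X}$ and invoke the Stone--Weierstrass theorem. Let $\mathcal{A}$ denote the collection of functions on $\mathcal{X}$ of the form
\begin{equation}
x \mapsto \sum_{k=0}^n \sum_{J \in \{1,\ldots,d\}^k} \alpha_J S(x)^J
\end{equation}
for finite $n\geq 0$ and real coefficients $\alpha_J$. I would verify that (i) every element of $\mathcal{A}$ is continuous on $\mathcal{X}$, (ii) $\mathcal{A}$ is a subalgebra containing the constants, and (iii) $\mathcal{A}$ separates the points of $\mathcal{X}$. Once these three points are secured, Stone--Weierstrass on the compact Hausdorff space $\mathcal{X}$ delivers the stated uniform approximation.

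For (i), each iterated integral $x \mapsto S(x)^J$ is continuous with respect to the Lipschitz (equivalently, $1$-variation) topology via the classical factorial bounds on iterated integrals of bounded-variation paths, so any finite linear combination is continuous on $\mathcal{X}$. The constants lie in $\mathcal{A}$ because $S(x)^{(0)} = 1$. The non-trivial algebraic ingredient for (ii) is closure under pointwise multiplication. Here I would appeal to the shuffle identity: for any two multi-indices $I$ and $J$, the product $S(x)^I \cdot S(x)^J$ equals a finite integer-coefficient linear combination of signature coordinates $S(x)^K$ indexed by the shuffles $K$ of $I$ and $J$. Consequently the product of two elements of $\mathcal{A}$ lies in $\mathcal{A}$, making it a subalgebra of the continuous functions on $\mathcal{X}$.

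For the separation property (iii), I would invoke the Hambly--Lyons uniqueness theorem: two paths in $\mathcal{C}(I,E)$ whose equivalence classes modulo tree-like reparameterisation differ produce distinct signatures. Since tree-like paths have been excluded from $\mathcal{X}$ (per the footnote), any distinct $x, x' \in \mathcal{X}$ satisfy $S(x) \neq S(x')$, and therefore some coordinate functional $S(\cdot)^J$ takes different values on them. Stone--Weierstrass then closes the argument.

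The main obstacle is step (iii): it rests on the deep Hambly--Lyons theorem, whose proof is substantially more delicate than the purely algebraic shuffle identity underlying (ii) and the bounded-variation estimate underlying (i). The explicit removal of tree-like paths from $\mathcal{X}$ is not a mere technicality but precisely what makes the separation property, and thus the whole approximation theorem, go through.
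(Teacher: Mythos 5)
Your proposal is correct and is essentially the argument behind the result the paper cites (Fermanian, Thm.~2.3.5): the signature coordinates form a point-separating subalgebra containing constants, thanks to the shuffle identity and the Hambly--Lyons uniqueness theorem (with tree-like paths removed), and Stone--Weierstrass on the compact set $\mathcal{X}$ gives uniform approximation. Your closing observation that the separation step is the deep ingredient, and that the exclusion of tree-like paths is exactly what makes it work, is also accurate.
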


\subsection{Truncating the Signature}\label{ssec:truncation}
In view of numerical applications \citep{bonnier2019deep, graham2013sparse, arribas2018signature, moore2019using, kalsi2020optimal}, the signature of a path $S(x)$ might need to be truncated at a certain level $n \in \mathbb{N}$ yielding the approximation $S^{\leq n}(x) = (1, S(x)_1, \ldots, S(x)_n) \in \mathcal{T}^{\leq n}(E):=\mathbb{R} \oplus E^{\otimes 1} \oplus \ldots E^{\otimes n}$ given by the collection of the first $(d^{n+1}-1)/(d-1)$ iterated integrals in equation (\ref{eqn:sig}). Nonetheless, the resulting approximation is reasonable thanks to \citet[Proposition 2.2]{lyons2007differential} which states that the absolute value of all neglected terms decays factorially as $|S(x)^{(i_1,\ldots,i_n)}|=\mathcal{O}(\frac{1}{n!})$. This factorial decay ensures that when the signature of a path $x$ is truncated, only a negligible amount of information about $x$ is lost \cite[Sec.~1.3]{bonnier2019deep}.

\subsection{Robustness to irregular sampling}\label{ssec:irregular_sampling}
The invariance of the signature to a special class of transformations on the time-domain of a path \cite[Proposition 7.10]{friz2010multidimensional} called time reparametrizations, such as shifting $t \mapsto t+b$ and acceleration $t \mapsto t^b$ $(b\geq 0)$, partially explains its effectiveness to deal with irregularly sampled data-streams \citep{bonnier2019deep, chevyrev2016primer}. In effect, the iterated integrals in equation (\ref{eqn:sig}) disregard the time parametrization of a path $x$, but focus on describing its shape. To retain the information carried by time it suffices to augment the state space of $x$ by adding time $t$ as an extra dimension yielding $t \mapsto \hat{x}_t = (t, x^{(1)}_t,\ldots,x^{(d)}_t)$. This augmentation becomes particularly useful in the case of univariate time-series where the action of the signature becomes somewhat trivial as there are no interesting dependencies to capture between the different path-coordinates \cite[Example 5]{chevyrev2016primer}.

\section{METHODS}\label{sec:methods}

\begin{figure*}[ht]
    \centering
    \includegraphics[scale=0.8]{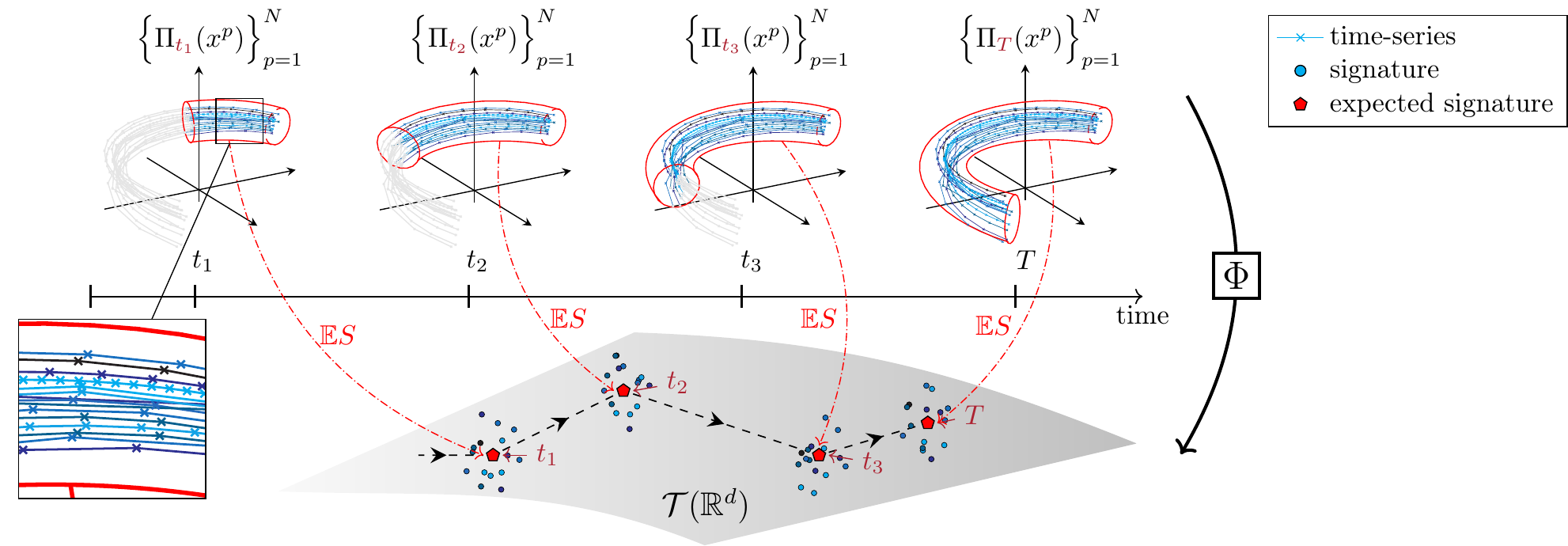}
\caption{Schematic overview of the action of \textit{pathwise expected signature} $\Phi$ on a group of time-series $\{x^p\}_{p=1}^{N}$. (top) Representation of the information about the group of time-series available from start up to time $t_k$. (bottom) At each time $t_k$ this information gets embedded into a single point in $\mathcal{T}(\mathbb{R}^d)$.}
\label{fig:schematic}
\end{figure*}


The \acrfull{DR} setting for sequential data we have set up so far consists of $M$ groups of input-output pairs of the form 
\begin{equation}\label{eqn:data_paths}
    \left\{\left(\{x^{i,p}\in \mathcal{C}(I,E)\}_{p=1}^{N_i}, y^i \in \mathbb{R}\right)\right\}_{i=1}^M,
\end{equation}

such that the finite set of paths $\mathcal{X} = \bigcup_{i=1}^M\{x^{i,p}\}_{p=1}^{N_i}$ is a compact subset of $\mathcal{C}(I,E)$. As mentioned in \Cref{sec:pb_def}, we can summarize the information carried by the collection of paths $\{x^{i,p}\}_{p=1}^{N_i}$ in group $i$ by considering the empirical measure $\delta^i = \frac{1}{N_i}\sum_{p=1}^{N_i}\delta_{x^{i,p}} \in \mathcal{P}(\mathcal{X})$, where $\delta_{x^{i,p}}$ is the Dirac measure centred at the path $x^{i,p}$. In this way the input-output pairs in (\ref{eqn:data_paths}) can be represented as follows
\begin{equation}\label{eqn:data}
    \left\{\left(\delta^i \in \mathcal{P}(\mathcal{X}), y^i \in \mathbb{R}\right)\right\}_{i=1}^M.
\end{equation}

The sequence of moments $(\mathbb{E}[Z^{\otimes m}])_{m\geq 0}$ is classically known to characterize the law $\mu_Z = \mathbb{P} \circ Z^{-1}$ of any finite-dimensional random variable $Z$ (provided the sequence does not grow too fast). It turns out that in the infinite dimensional case of laws of paths-valued random variables (or equivalently of probability measures on paths) an analogous result holds \citep{chevyrev2018signature}. It says that one can fully characterise a probability measure on paths (provided it has compact support) by replacing monomials of a
vector by iterated integrals of a path (i.e.~signatures). At the core of this result is a recent tool from stochastic analysis that we introduce next. 

\begin{definition} 
The \textit{expected signature} is the map $\mathbb{E}S : \mathcal{P}(\mathcal{X}) \to \mathcal{T}(E)$ defined element-wise for any $k\geq 0$ and any $(i_1, \ldots, i_k) \in \{1, \ldots, d\}^k$ as
\begin{equation}
    \mathbb{E}S(\mu)^{(i_1, \ldots, i_k)} = \int_{x \in \mathcal{X}}S(x)^{(i_1, \ldots, i_k)}\mu(dx).
\end{equation}
\end{definition}

We will rely on the following important theorem (see Appendix for a proof) in order to prove the universality of the proposed techniques for \gls{DR} on sequential data presented in the next two sections.

\begin{theorem}\label{thm:continuity-expected-sig}
The \textit{expected signature} map is injective and weakly continuous.
\end{theorem}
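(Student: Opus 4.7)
The plan is to prove the two assertions separately, relying on \Cref{thm:SW-paths} and the factorial decay of signature coefficients recalled in \Cref{ssec:truncation}. In both halves, compactness of $\mathcal{X}$ plays a central role.

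\textbf{Injectivity.} I would run a standard Riesz-type argument. Suppose $\mu,\nu \in \mathcal{P}(\mathcal{X})$ satisfy $\mathbb{E}S(\mu) = \mathbb{E}S(\nu)$; coordinate-wise this reads $\int_\mathcal{X} S(x)^J d\mu(x) = \int_\mathcal{X} S(x)^J d\nu(x)$ for every multi-index $J$. Linearity extends this identity to every element of the linear span of signature coordinates, and then \Cref{thm:SW-paths} upgrades it by uniform approximation to $\int_\mathcal{X} f\,d\mu = \int_\mathcal{X} f\,d\nu$ for every $f \in C(\mathcal{X},\mathbb{R})$, forcing $\mu = \nu$ by Riesz--Markov on the compact metric space $\mathcal{X}$. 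The point-separation hypothesis implicit in \Cref{thm:SW-paths} is exactly the Hambly--Lyons uniqueness theorem for signatures and is what motivates the removal of tree-like paths noted in the footnote of \Cref{sec:background}.

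\textbf{Weak continuity.} I would obtain convergence level by level and pass to the infinite sum via a factorial tail bound. Let $\mu_n \to \mu$ weakly in $\mathcal{P}(\mathcal{X})$. Since $\mathcal{X}$ is compact in the Lipschitz topology, each coordinate map $x \mapsto S(x)^J$ is continuous and uniformly bounded on $\mathcal{X}$, so weak convergence gives $\mathbb{E}S(\mu_n)^J \to \mathbb{E}S(\mu)^J$ for every multi-index $J$. Finite-dimensionality of each level $E^{\otimes k}$ then upgrades this to $\norm{\mathbb{E}S(\mu_n)_k - \mathbb{E}S(\mu)_k}_{E^{\otimes k}} \to 0$ for every $k \geq 0$. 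To conclude in the Hilbert norm on $\mathcal{T}(E)$, I would combine Jensen's inequality with the uniform bound $|S(x)^{(i_1,\ldots,i_k)}| = \mathcal{O}(1/k!)$ from \Cref{ssec:truncation} to produce an estimate of the form $\norm{\mathbb{E}S(\mu)_k}_{E^{\otimes k}}^2 \leq C^k / (k!)^2$, with a constant $C$ depending only on $\mathcal{X}$ (in particular independent of $\mu$). Since $\sum_{k\geq 0} C^k/(k!)^2 < \infty$, dominated convergence on $\mathbb{N}$ finally delivers $\norm{\mathbb{E}S(\mu_n) - \mathbb{E}S(\mu)}_{\mathcal{T}(E)} \to 0$.

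\textbf{Where the difficulty sits.} The genuinely deep ingredient behind injectivity is the point-separation property of signatures on non-tree-like paths, which is imported from rough path theory and already absorbed into \Cref{thm:SW-paths}; given that, the rest is a clean Riesz argument. The subtle step for weak continuity is that coordinate-wise convergence in the infinite direct sum $\mathcal{T}(E)$ is strictly weaker than norm convergence, so the factorial decay of signature coefficients is indispensable in order to produce a summable majorant uniformly in $\mu$. Compactness of $\mathcal{X}$ plays a dual role throughout: it ensures the uniform boundedness of each signature coordinate needed for the weak-convergence step, and it supplies the compact support required for the Riesz--Markov step.
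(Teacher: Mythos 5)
Your proposal is correct, and it diverges from the paper in a genuinely different way on the injectivity half. The paper does not prove injectivity at all: it imports it as a black box from \citet{chevyrev2018signature} (Theorem 5.6 there, originally due to Fawcett for compactly supported measures), stated as a lemma in the appendix. You instead derive it from \Cref{thm:SW-paths} by a Stone--Weierstrass/Riesz--Markov duality argument: equality of $\mathbb{E}S(\mu)$ and $\mathbb{E}S(\nu)$ gives $\int L\,d\mu = \int L\,d\nu$ for every finite linear functional $L$ of the signature, uniform density then gives $\int f\,d\mu = \int f\,d\nu$ for all continuous $f$ (via a $2\epsilon$ estimate), and two Borel probability measures on a compact metric space that agree on $C(\mathcal{X},\mathbb{R})$ coincide. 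This is in effect the classical proof of the cited result, so your route is self-contained given \Cref{thm:SW-paths}, at the cost of making the argument depend on that theorem (whose point-separation hypothesis is, as you note, the Hambly--Lyons uniqueness theorem absorbed by the removal of tree-like paths). For weak continuity the two arguments are the same in substance --- coordinate-wise convergence from weak convergence plus continuity and boundedness of each $x \mapsto S(x)^J$ on the compact set $\mathcal{X}$, then an upgrade to norm convergence in $\mathcal{T}(E)$ via the factorial decay of \citet[Proposition 2.2]{lyons2007differential} --- but the paper compresses the second step into a single sentence, whereas you supply the uniform summable majorant $\norm{\mathbb{E}S(\mu)_k}_{E^{\otimes k}}^2 \leq C^k/(k!)^2$ and the dominated-convergence-on-$\mathbb{N}$ step that actually justifies passing from level-wise to Hilbert-norm convergence. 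Both halves are sound; the only cosmetic caveat is that your finite-dimensionality remark at each level $E^{\otimes k}$ requires $d<\infty$, which is the regime the paper works in anyway ($E=\mathbb{R}^d$).
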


\subsection{A feature-based approach (\acrshort{SES})}\label{sec:pwsig} 

As stated in \Cref{thm:SW-paths}, linear combinations of path-iterated-integrals are universal approximators for continuous functions $f$ on compact sets of paths. In this section we prove the analogous density result for continuous functions $F$ on probability measures on paths. We do so by reformulating the problem of \acrshort{DR} on paths as a linear regression on the iterated integrals of an object that we will refer to as the \emph{pathwise expected signature}. We start with the definition of this term followed by the density result. Ultimately, we show that our \acrshort{DR} algorithm materializes as extracting signatures on signatures. 

For any $t \in I=[a,T]$ consider the projection
\begin{equation}
    \Pi_{t} : \mathcal{C}(I, E)\to \mathcal{C}([a,t], E)
\end{equation}

that maps any path $x$ to its restriction to the sub-interval $[a,t] \subset I$, such that $\Pi_t(x) = x_{|_{[a,t]}}$ (see \Cref{fig:schematic}).
\begin{definition}
The \textit{pathwise expected signature} is the function $\Phi: \mathcal{P}(\mathcal{X}) \to \mathcal{C}(I, \mathcal{T}(E))$ that to a probability measure $\mu \in \mathcal{P}(\mathcal{X})$ associates the path $\Phi(\mu):I \to \mathcal{T}(E)$ defined as
\begin{equation}
    \Phi(\mu) : t \mapsto \mathbb{E}_{x \sim \mu}\left[S\left(\Pi_t(x)\right)\right].
\end{equation}
\end{definition}

The action of the function $\Phi$ is illustrated on \Cref{fig:schematic}, and its implementation is outlined in \Cref{algo:Pathwise-E-Sig}.\footnote{Equivalently $\Phi(\mu) = \mathbb{E}S(\Pi_t \# \mu)$ where $\Pi_t \# \mu$ is the push-forward measure of $\mu$ by the measurable map $\Pi_t$.}

\begin{algorithm}[H]
    \caption{Pathwise Expected Signature (PES)}
    \label{algo:Pathwise-E-Sig}
    \begin{algorithmic}[1]
        \State\textbf{Input: } $N$ streams $\{\mathbf{x}^p\}_{p=1}^N$ each of length $\ell$
        \State Create array $\Phi$ to store the PES
        \State Create array $S$ to store the signatures
        \State Initialize  $S[p]\leftarrow 1$ for $p\in\{1,\ldots,N\}$
        \For{each time-step $k\in\{2,\ldots,\ell\}$}
        \State $S[p]\leftarrow S[p]\otimes\exp(\mathbf{x}^p_k-\mathbf{x}^p_{k-1})$ for $p\in\{1,\ldots,N\}$
        \State$\Phi[k]\leftarrow \mathrm{avg}(S)$
        \EndFor
        \State \textbf{Output:}  The pathwise expected signature $\Phi$
    \end{algorithmic}%
\end{algorithm}
In line $6$ of the algorithm we use an algebraic property for fast computation of the signature, known as Chen's relation (detailed in Appendix).
          
         

The next theorem states that any weakly continuous function on $\mathcal{P}(\mathcal{X})$ can be uniformly well approximated by a linear combination of terms in the signature of the pathwise expected signature.


\begin{theorem}\label{thm:Terry}
Let $\mathcal{X} \subset \mathcal{C}(I,E)$ be a compact set of paths and consider a weakly continuous function $F:\mathcal{P}(\mathcal{X})\to\mathbb{R}$. Then for any $\epsilon > 0$ there exists a truncation level $m\geq 0$ such that for any probability measure $\mu \in \mathcal{P}(\mathcal{X})$
\begin{equation}
    \Big|F(\mu) - \sum_{k=0}^m\sum_{J \in \{1,\ldots,d\}^k} \alpha_{J}S(\Phi(\mu))^{J} \Big|<\epsilon, 
\end{equation}

where $\alpha_J \in \mathbb{R}$ are scalar coefficients.
\end{theorem}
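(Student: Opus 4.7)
The plan is to reduce the claim to Thm~\ref{thm:SW-paths} (universality of path signatures) applied to an auxiliary scalar function on the image of $\Phi$. Concretely, I would set $\widetilde{\mathcal{X}} := \Phi(\mathcal{P}(\mathcal{X})) \subset \mathcal{C}(I, \mathcal{T}(E))$ and define $\tilde{F} : \widetilde{\mathcal{X}} \to \mathbb{R}$ by $\tilde{F}(\Phi(\mu)) := F(\mu)$. If I can show that $\widetilde{\mathcal{X}}$ is a compact set of paths and that $\tilde{F}$ is well-defined and continuous on it, then Thm~\ref{thm:SW-paths} yields, for any $\varepsilon>0$, a truncation $m$ and coefficients $\alpha_J$ with $|\tilde{F}(\gamma) - \sum_{k\le m}\sum_J \alpha_J S(\gamma)^J|<\varepsilon$ for all $\gamma\in\widetilde{\mathcal{X}}$, and specialising to $\gamma = \Phi(\mu)$ gives exactly the desired estimate.

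To carry out this reduction I would establish three ingredients. \emph{(i) Weak compactness of $\mathcal{P}(\mathcal{X})$:} since $\mathcal{X}$ is a compact metric space, Prokhorov's theorem immediately gives that $\mathcal{P}(\mathcal{X})$ is compact in the weak topology. \emph{(ii) Injectivity of $\Phi$:} evaluating the path $\Phi(\mu)$ at the endpoint $t=T$ yields $\mathbb{E}S(\mu)$, so if $\Phi(\mu_1)=\Phi(\mu_2)$ then $\mathbb{E}S(\mu_1)=\mathbb{E}S(\mu_2)$, and the injectivity half of Thm~\ref{thm:continuity-expected-sig} forces $\mu_1=\mu_2$. \emph{(iii) Continuity of $\Phi$:} for each fixed $t$, the restriction map $\Pi_t$ is continuous, so the push-forward $\Pi_t\#\mu$ depends weakly-continuously on $\mu$, and by the weak-continuity half of Thm~\ref{thm:continuity-expected-sig} the composition $\mu\mapsto \mathbb{E}S(\Pi_t\#\mu)=\Phi(\mu)(t)$ is continuous in $\mu$. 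Uniformity in $t$---needed to make $\Phi$ continuous into $\mathcal{C}(I,\mathcal{T}(E))$---comes from the compactness of $\mathcal{X}$ in Lipschitz norm, which gives a common Lipschitz constant and hence a uniform modulus of continuity for the family $\{t\mapsto S(\Pi_t(x))\}_{x\in\mathcal{X}}$ via the factorial decay of signature coefficients (Sec.~\ref{ssec:truncation}). Averaging preserves the modulus, so $\{\Phi(\mu)\}_{\mu\in\mathcal{P}(\mathcal{X})}$ is equicontinuous.

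Combining (i)--(iii), $\Phi$ is a continuous injection from the compact space $\mathcal{P}(\mathcal{X})$ into $\mathcal{C}(I,\mathcal{T}(E))$, hence a homeomorphism onto its image $\widetilde{\mathcal{X}}$, which is therefore compact. Well-definedness of $\tilde{F}$ is immediate from injectivity, and continuity of $\tilde{F}=F\circ\Phi^{-1}$ follows by composing the weak continuity of $F$ with the continuity of $\Phi^{-1}$. Applying Thm~\ref{thm:SW-paths} to $\tilde{F}$ on the compact set of $\mathcal{T}(E)$-valued paths $\widetilde{\mathcal{X}}$ then closes the argument.

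The main obstacle is the uniform-in-$t$ continuity in step (iii): pointwise weak continuity is a direct corollary of Thm~\ref{thm:continuity-expected-sig}, but upgrading to continuity in the sup-norm on $\mathcal{C}(I,\mathcal{T}(E))$ requires the quantitative control of $\|S(\Pi_t(x))-S(\Pi_s(x))\|$ uniform over $x\in\mathcal{X}$. A secondary technical point is that $\mathcal{T}(E)$ is infinite-dimensional, so the application of Thm~\ref{thm:SW-paths} in the last step must either invoke its Banach-valued extension or be preceded by truncating $\Phi$ at a sufficiently high level, with the tail controlled by the factorial decay of signature terms.
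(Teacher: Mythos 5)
Your overall strategy is exactly the paper's: establish that $\mathcal{P}(\mathcal{X})$ is weakly compact, that $\Phi$ is injective and weakly continuous, conclude that $K=\Phi(\mathcal{P}(\mathcal{X}))$ is a compact set of $\mathcal{T}(E)$-valued paths on which $F$ factors as a continuous $f\circ\Phi$, and then invoke Thm~\ref{thm:SW-paths} for $f$ on $K$. Your steps (i) and (ii) coincide with the paper's arguments (the paper cites weak compactness of $\mathcal{P}(\mathcal{X})$ of a compact metric space, and proves injectivity of $\Phi$ by evaluating at $t=T$ and appealing to injectivity of $\mathbb{E}S$).

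The one place where your sketch falls short is step (iii). Pointwise weak continuity of $\mu\mapsto\Phi(\mu)(t)$ together with equicontinuity of the family $\{t\mapsto\Phi(\mu)(t)\}_{\mu}$ upgrades only to continuity of $\Phi$ into the \emph{sup-norm} topology on paths. But $\mathcal{C}(I,\mathcal{T}(E))$ carries the norm $\norm{\cdot}_{Lip}$, and Thm~\ref{thm:SW-paths} requires $K$ to be compact with respect to that norm (and $f$ continuous for it). A set that is sup-norm compact and has a uniform Lipschitz bound need not be compact in $\norm{\cdot}_{Lip}$ (consider $t\mapsto\tfrac{1}{n}\sin(nt)$), so your argument as written does not deliver the compactness of $K$ in the topology in which the signature-universality theorem is stated. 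The paper closes exactly this gap by a different device: it pushes $\mu_n\to\mu$ forward through the continuous map $S_{path}:x\mapsto(t\mapsto S(\Pi_t(x)))$, uses equivalence of weak convergence and Wasserstein-1 convergence on the compact set $S_{path}(\mathcal{X})$, and bounds $\norm{\Phi(\mu_n)-\Phi(\mu)}_{Lip}\leq W_1\bigl(S_{path}\#\mu_n,\,S_{path}\#\mu\bigr)$ via Jensen's inequality, which gives convergence directly in the Lipschitz norm. Replacing your equicontinuity step with this Jensen--Wasserstein estimate (or otherwise proving Lip-norm continuity of $\Phi$) makes your proof complete; the remaining caveat you raise about the infinite-dimensionality of $\mathcal{T}(E)$ is already covered by the paper's statement of Thm~\ref{thm:SW-paths}, which allows $E$ of countably infinite dimension.
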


\begin{proof}
$\mathcal{P}(\mathcal{X})$ is compact (see proof of \Cref{thm:kernel}) and the image of a compact set by a continuous function is compact. Therefore, the image $K=\Phi(\mathcal{P}(\mathcal{X}))$ is a compact subset of $\mathcal{C}(I,\mathcal{T}(E))$. Consider a weakly continuous function $F:\mathcal{P}(\mathcal{X}) \to \mathbb{R}$. Given that $\Phi$ is injective (see Appendix), $\Phi$ is a bijection when restricted to its image $K$. Hence, there exists a continuous function $f:K \to \mathbb{R}$ (w.r.t $\norm{\cdot}_{Lip}$) such that $F = f\circ \Phi$. By \Cref{thm:SW-paths} we know that for any $\epsilon > 0$, there exists a linear functional $\mathcal{L} : \mathcal{T}(E) \to \mathbb{R}$ such that $\norm{f - \mathcal{L} \circ S}_\infty < \epsilon$. Thus $\|F \circ \Phi^{-1} - \mathcal{L} \circ S\|_\infty < \epsilon$, implying $\|F - \mathcal{L} \circ S\circ\Phi\|_\infty<\epsilon$.
\end{proof}

The practical consequence of this theorem is that the complex task of learning a highly non-linear regression function $F : P(\mathcal{X}) \to \mathbb{R}$ can be reformulated as a linear regression on the signature (truncated at level $m$) of the pathwise expected signature (truncated at level $n$). The resulting \acrshort{SES} algorithm is outlined in Alg.\ref{algo:SES}, and has time complexity $\mathcal{O}(M\ell d^n(N + d^m))$, where $M$ is the total number of groups, $\ell$ is the largest length across all time-series, $d$ is the state space dimension, $N$ is the maximum number of input time series in a single group and $n,m$ are the truncation levels of the signature and pathwise expected signature respectively.
\begin{algorithm}[h]
    \caption{\acrshort{DR} on sequential data with \acrshort{SES}}
    \label{algo:SES}
    \begin{algorithmic}[1]
         \State\textbf{Input: }  $\{(\{\mathbf{x}^{i,p}\}_{p=1}^{N_i}, y^i)\}_{i=1}^{M}$
         \State Create array $A$ to store $M$ signatures of the PES.
            \For{ each group $i\in\{1,...,M\}$} 
          
             \State $\Phi = \mathrm{PES}(\{\mathbf{x}^{i,p}\}_{p=1}^{N_i})$ \Comment{Using \Cref{algo:Pathwise-E-Sig}}
             \For{each time-step $k\in\{2,\ldots,\ell_i\}$} 
             \\\Comment{Compute the signature of the PES}\\
             \Comment{via Chen's relation}
             \State$A[:,i] \leftarrow A[:,i]\otimes\exp{(\Phi_k-\Phi_{k-1})}$
             \EndFor
         
            \EndFor
            \State $(\alpha_0, \ldots , \alpha_c)$ $\leftarrow$ LinearRegression($A,(y^i)_{i=1}^M$) 
            \State\textbf{Output: }Regression coefficients $(\alpha_0, \ldots , \alpha_c)$ 
    \end{algorithmic}%
\end{algorithm}
\subsection{A kernel-based approach (\acrshort{KES}) }\label{sec:ksig}

The \acrshort{SES} algorithm is well suited to datasets containing a possibly large number $MN$ of relatively low dimensional paths. If instead the input paths are high dimensional, it would be prohibitive to deploy \acrshort{SES} 
since the number of terms in the signature increases exponentially in the dimension $d$ of the path. To address this, in this section we construct a new kernel function $k:\mathcal{P}(\mathcal{X}) \times \mathcal{P}(\mathcal{X}) \to \mathbb{R}$ combining the expected signature with a Gaussian kernel and prove its universality to approximate weakly continuous function on probability measures on paths. The resulting kernel-based algorithm (\acrshort{KES}) for \gls{DR} on sequential data is well-adapted to the opposite data regime to the one above, i.e.~when the dataset consists of few number $MN$ of high dimensional paths.





\begin{theorem}\label{thm:kernel} Let $\mathcal{X} \subset \mathcal{C}(I,E)$ be a compact set of paths and $\sigma>0$. The kernel $k:\mathcal{P}(\mathcal{X})\times\mathcal{P}(\mathcal{X})\to\mathbb{R}$ defined by 
\begin{equation}\label{eqn:univ_kernel}
k(\mu,\nu)=\exp{\Big(-\sigma^2\norm{\mathbb{E}S(\mu)-\mathbb{E}S(\nu)}_{\mathcal{T}(E)}^2\Bigr)},
\end{equation}
is universal, i.e. the associated RKHS is dense in the space of continuous functions from $\mathcal{P}(\mathcal{X})$ to $\mathbb{R}$. 
\end{theorem}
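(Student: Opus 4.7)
The plan is to factor the universality claim through the feature map $\mathbb{E}S$, reducing it to the known universality of the Gaussian kernel on a compact subset of a Hilbert space. Concretely, I would write $k(\mu,\nu) = k_G(\mathbb{E}S(\mu), \mathbb{E}S(\nu))$, where $k_G(u,v) = \exp(-\sigma^2\|u-v\|_{\mathcal{T}(E)}^2)$ is the ambient Gaussian kernel, and argue that the approximation properties of $k_G$ on $K := \mathbb{E}S(\mathcal{P}(\mathcal{X}))$ transfer cleanly back to $\mathcal{P}(\mathcal{X})$.

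First, I would establish that $\mathcal{P}(\mathcal{X})$ is compact in the weak topology. Since $\mathcal{X} \subset \mathcal{C}(I,E)$ is a compact metric space, every probability measure on $\mathcal{X}$ is automatically tight, and Prokhorov's theorem gives weak compactness of $\mathcal{P}(\mathcal{X})$; metrisability by the L\'evy--Prokhorov metric makes $\mathcal{P}(\mathcal{X})$ a compact metric space in its own right.

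Second, I would invoke \Cref{thm:continuity-expected-sig}: $\mathbb{E}S : \mathcal{P}(\mathcal{X}) \to \mathcal{T}(E)$ is a continuous injection. A continuous injection from a compact Hausdorff space into a Hausdorff space is automatically a homeomorphism onto its image, so $K = \mathbb{E}S(\mathcal{P}(\mathcal{X}))$ is a compact subset of $\mathcal{T}(E)$ and every continuous $F : \mathcal{P}(\mathcal{X}) \to \mathbb{R}$ factors uniquely as $F = \tilde{F} \circ \mathbb{E}S$ for some $\tilde{F} \in C(K)$. Approximating $F$ uniformly by elements of the RKHS of $k$ is then equivalent to approximating $\tilde{F}$ uniformly on $K$ by elements of the RKHS of $k_G$ restricted to $K$, since pulling back via $\mathbb{E}S$ sends a finite sum $\sum_j \alpha_j k_G(\cdot, \mathbb{E}S(\mu_j))$ to $\sum_j \alpha_j k(\cdot, \mu_j)$ and preserves the uniform norm.

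Third, I would appeal to the universality of the Gaussian kernel on compact subsets of the separable Hilbert space $\mathcal{T}(E)$: for any $\tilde{F} \in C(K)$ and any $\epsilon > 0$ there exist $u_1,\dots,u_n \in K$ and $\alpha_1,\dots,\alpha_n \in \mathbb{R}$ with $\sup_{u \in K}\,\bigl|\tilde{F}(u) - \sum_j \alpha_j k_G(u, u_j)\bigr| < \epsilon$. Choosing $\mu_j \in \mathcal{P}(\mathcal{X})$ with $\mathbb{E}S(\mu_j) = u_j$ and pulling back gives the desired uniform approximation of $F$ by an RKHS element of $k$.

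The main obstacle is the third step: the classical Steinwart universality theorem for Gaussian kernels is usually stated in $\mathbb{R}^d$, whereas $\mathcal{T}(E)$ is infinite-dimensional. I would handle this either by citing the extension of c-universality to separable Hilbert spaces (Christmann--Steinwart), or by giving a direct Stone--Weierstrass argument on $C(K)$: the set of finite linear combinations of Gaussians centred at points of $K$ (with varying bandwidth) is closed under products, because $k_G(\cdot,u)\,k_G(\cdot,v)$ is, after completing the square, a scalar multiple of a Gaussian centred at $(u+v)/2$ with halved bandwidth; this algebra contains the constants, and it separates points of $K$ because $\mathbb{E}S$ (and hence the inclusion $K \hookrightarrow \mathcal{T}(E)$) already does. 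Stone--Weierstrass then gives density in $C(K)$, from which universality on $\mathcal{P}(\mathcal{X})$ follows by the pull-back argument above.
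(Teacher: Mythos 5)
Your primary route is exactly the paper's proof: the paper also reduces to \citet[Thm.~2.2]{christmann2010universal} by taking $K=\mathcal{P}(\mathcal{X})$ (weakly compact since $\mathcal{X}$ is a compact metric space), $H=\mathcal{T}(E)$ (separable Hilbert), and $\rho=\mathbb{E}S$ (continuous and injective by \Cref{thm:continuity-expected-sig}), so the two arguments coincide; your extra unpacking of why a continuous injection from a compact space factors continuous functions through its image is just the internal mechanism of that theorem.

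One caution about your fallback Stone--Weierstrass sketch: it does not quite establish universality of $k$ as stated. The product $k_G(\cdot,u)\,k_G(\cdot,v)$ is indeed a scalar multiple of a Gaussian centred at $(u+v)/2$, but with bandwidth parameter $2\sigma^2$ rather than $\sigma^2$, so the algebra you generate consists of Gaussians of \emph{varying} bandwidths. Stone--Weierstrass then gives density of that algebra in $C(K)$, but that algebra is not contained in the RKHS of the fixed-bandwidth kernel $k$, which is what universality requires; bridging this needs an additional argument (e.g.\ the Taylor-expansion feature-space argument of Christmann--Steinwart, which works with the algebra generated by the monomials $u\mapsto\langle u,w\rangle^n$ and a nonvanishing normalising factor). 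This is precisely why the Gaussian's universality is a theorem with a nontrivial proof rather than a one-line consequence of Stone--Weierstrass. Since your first option already closes the argument, this does not affect the correctness of the proposal as a whole.
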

\begin{proof}
By \citet[Thm. 2.2]{christmann2010universal} if $K$ is a compact metric space and $H$ is a separable Hilbert space such that there exists a continuous and injective map $\rho : K \to H$, then for $\sigma >0$ the Gaussian-type kernel $k_\sigma : K \times K \to \mathbb{R}$ is a universal kernel, where $k_\sigma(z,z') = \exp{\Big(-\sigma^2\norm{\rho(z)-\rho(z')}_{H}^2\Bigr)}$. With the metric induced by $\norm{\cdot}_{Lip}, \mathcal{X}$ is a compact metric space. Hence the set $\mathcal{P}(\mathcal{X})$ is weakly-compact \cite[Thm. 10.2]{walkden2014ergodic}. By \Cref{thm:continuity-expected-sig}, the \textit{expected signature} $\mathbb{E}S:\mathcal{P}(\mathcal{X})\to\mathcal{T}(E)$ is injective and weakly continuous. Furthermore $\mathcal{T}(E)$ is a Hilbert space with a countable basis, hence it is separable. Setting $K=\mathcal{P}(\mathcal{X}), H = \mathcal{T}(E)$ and $\rho = \mathbb{E}S$ concludes the proof. 
\end{proof} 

\subsection{Evaluating the universal kernel $k$}

When the input measures are two empirical measures $\delta^1=\frac{1}{N_1}\sum_{p=1}^{N_1}\delta_{x^{1,p}}$ and $\delta^2=\frac{1}{N_2}\sum_{q=1}^{N_2}\delta_{x^{2,q}}$, the evaluation of the kernel $k$ in \Cref{eqn:univ_kernel} requires the ability to compute the tensor norm on $\mathcal{T}(E)$
 \begin{align}\label{eqn:mmd}
 &\norm{\mathbb{E}S(\delta^1)-\mathbb{E}S(\delta^2)}^2 = E_{11}+E_{22}-2E_{12},\nonumber\\
 &E_{ij} =\frac{1}{N_iN_j} \sum_{p,q=1}^{N_i,N_j}\hspace{-2pt}\bigl\langle S(x^{i,p}),S(x^{j,q})\bigr\rangle, ~ i,j\in\{1,2\}
 \end{align}
where all the inner products are in $\mathcal{T}(E)$. Each of these inner products defines another recent object from stochastic analysis called the signature kernel $k_{sig}$ \citep{kiraly2019kernels}. Recently,  \citet{cass2020computing} have shown that $k_{sig}$ is actually the solution of a surprisingly simple partial differential equation (PDE). This result provides us with a ``kernel trick'' for computing the inner products in \Cref{eqn:mmd} by a simple call to any numerical PDE solver of choice.  

\begin{theorem}\citep[Thm. 2.2]{cass2020computing}\label{thm:pde_sig_kernel}
The signature kernel defined as 
\begin{equation}
    k_{sig}(x,y) := \langle S(x),S(y)\rangle_{\mathcal{T}(E)}
\end{equation}

is the solution $u:[a,T]\times[a,T] \to \mathbb{R}$ at $(s,t)=(T,T)$ of the following linear hyperbolic PDE
\begin{align}
    \frac{\partial^2 u}{\partial s \partial t} = (\dot{x}_s^T\dot{y}_t)u && u(a,\cdot)=1,~u(\cdot,a)=1.
\end{align}
\end{theorem}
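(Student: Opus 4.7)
The plan is to work at the level of partial signatures and reduce the PDE to a term-by-term identity that collapses into itself. Concretely, for $(s,t) \in [a,T]^2$ introduce the partial-signature kernel
\begin{equation}
    u(s,t) := \bigl\langle S(\Pi_s(x)), S(\Pi_t(y)) \bigr\rangle_{\mathcal{T}(E)} = \sum_{k=0}^{\infty}\sum_{J \in \{1,\ldots,d\}^k} S(\Pi_s(x))^{J}\, S(\Pi_t(y))^{J},
\end{equation}
so that $u(T,T) = k_{sig}(x,y)$. The boundary conditions are immediate: when $s=a$ the path $\Pi_a(x)$ is a single point, hence $S(\Pi_a(x)) = (1,0,0,\ldots)$, and only the $k=0$ term in the sum survives, giving $u(a,t)=1$; symmetrically $u(s,a)=1$.

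The key analytic tool is the derivative formula for iterated integrals that follows directly from the Riemann--Stieltjes definition \eqref{eqn:sig}: for every multi-index $(i_1,\ldots,i_k)$ with $k\geq 1$,
\begin{equation}
    \frac{\partial}{\partial s}\, S(\Pi_s(x))^{(i_1,\ldots,i_k)} = S(\Pi_s(x))^{(i_1,\ldots,i_{k-1})}\, \dot{x}_s^{(i_k)},
\end{equation}
and analogously for $y$ in the variable $t$. Differentiating $u$ term-by-term gives that the $k=0$ contribution vanishes, while for each $k \geq 1$ the mixed partial of $S(\Pi_s(x))^{J} S(\Pi_t(y))^{J}$ equals $S(\Pi_s(x))^{(j_1,\ldots,j_{k-1})} S(\Pi_t(y))^{(j_1,\ldots,j_{k-1})}\, \dot{x}_s^{(j_k)} \dot{y}_t^{(j_k)}$. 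Summing the last index $j_k$ against itself produces the Euclidean inner product $\dot{x}_s^{T}\dot{y}_t$, and the remaining sum over $k \geq 1$ and multi-indices $(j_1,\ldots,j_{k-1})$ is precisely $u(s,t)$ (after re-indexing $k' = k-1$). Hence $\partial_s\partial_t u = (\dot{x}_s^{T}\dot{y}_t)\, u$.

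The main obstacle I expect is justifying the term-by-term differentiation of the infinite series under the partial derivatives. For this I would invoke the factorial decay bound recalled in \Cref{ssec:truncation}: since $x,y \in \mathcal{C}(I,E)$ are Lipschitz, $\|S(\Pi_s(x))_k\|_{E^{\otimes k}} = \mathcal{O}(\|x\|_{Lip}^k(T-a)^k/k!)$ uniformly in $s$, and an identical bound holds for the derivatives once the leading factor $\dot{x}_s^{(i_k)}$ is pulled out. By Cauchy--Schwarz in $E^{\otimes k}$ the level-$k$ summand and its formally differentiated version are dominated by terms whose series converges normally on $[a,T]^2$, so dominated convergence (or uniform convergence of partial sums) licenses the interchange of $\sum_k$ with $\partial_s$ and $\partial_t$. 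Combined with the boundary values above, this establishes that $u$ solves the stated linear hyperbolic Goursat problem, and evaluating at $(T,T)$ finishes the proof.
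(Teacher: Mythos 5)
The paper does not prove this theorem itself---it is imported verbatim from \citet[Thm.~2.2]{cass2020computing}---and your argument is a correct reconstruction of the proof given in that reference: the partial-signature kernel $u(s,t)=\langle S(\Pi_s(x)),S(\Pi_t(y))\rangle_{\mathcal{T}(E)}$, the fundamental-theorem-of-calculus identity peeling off the last letter of each iterated integral, the contraction of that letter into the Euclidean inner product $\dot{x}_s^{T}\dot{y}_t$ with re-indexing back to $u$, and the factorial decay bound to justify term-by-term differentiation of the series. The only point worth flagging is that $\dot{x}_s$ and $\dot{y}_t$ exist only almost everywhere for Lipschitz (e.g.\ piecewise-linear) paths, so the PDE should be understood a.e.\ or in its integrated Goursat form, which is also how the cited reference states it.
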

 
In light of \Cref{thm:kernel}, \gls{DR} on paths with \acrshort{KES} can be performed via any kernel method \citep{drucker1997support,quinonero2005unifying} available within popular libraries \citep{scikit-learn,de2017gpflow,gardner2018gpytorch} using the Gram matrix computed via \Cref{algo:KES} and leveraging the aforementioned kernel trick. When using a finite difference scheme (referred to as PDESolve in \Cref{algo:KES}) to approximate the solution of the PDE, the resulting time complexity of \acrshort{KES} is $\mathcal{O}(M^3+M^2N^2\ell^2d)$.


\begin{algorithm}[h]
    \caption{Gram matrix for \acrshort{KES}}
    \label{algo:KES}
    \begin{algorithmic}[1]
         \State\textbf{Input: }  $\{x^{i,p}\}_{p=1}^{N_i},~i=1,\ldots,M$ and $\sigma>0$.
         \State Initialize $0$-array $G \in \mathbb{R}^{M\times M}$  
            \For{each pair of groups $(i,j)$ such that $i\leq j$} 
            \State Initialize $0$-array $K_{ij}\in \mathbb{R}^{N_i\times N_j}$
            \State Similarly initialize $K_{ii}, K_{jj}\in \mathbb{R}^{N_i\times N_i},\mathbb{R}^{N_j\times N_j}$
            \For{$p,p'$ in group $i$ and $q,q'$ in group $j$}
            \State $ K_{ii}[p,p']\leftarrow \mathrm{PDESolve}(x^{i,p},x^{i,p'})$

            \State $ K_{jj}[q,q']\leftarrow \mathrm{PDESolve}(x^{j,q},x^{j,q'})$

            \State $ K_{ij}[p,q]\leftarrow \mathrm{PDESolve}(x^{i,p},x^{j,q})$
            \EndFor
            \State $G[i,j]\leftarrow \mathrm{avg}(K_{ii})+\mathrm{avg}(K_{jj})-2\times\mathrm{avg}(K_{ij})$ 
            \State $G[j,i]\leftarrow G[i,j]$
            \EndFor
            \State $G\leftarrow\exp(-\sigma^2 G)$ \Comment{element-wise $exp$}
            \State\textbf{Output:} The gram matrix $G$. 
    \end{algorithmic}%
\end{algorithm}

\paragraph{Remark}
In the case where the observed paths are assumed to be i.i.d. samples $\{x^p\}_{p=1}^N \sim \mu$ from the law of an underlying random process one would expect the bigger the sample size $N$, the better the approximation of $\mu$, and therefore of its expected signature $\mathbb{E}S(\mu)$. Indeed, for an arbitrary multi-index $\tau = (i_1,\ldots,i_k)$, the \textit{Central Limit Theorem} yields the convergence (in distribution) 
\begin{align*}
    \sqrt{N}\Bigl(\mathbb{E}_{x\sim\mu}[S^{\tau}(x)]-\frac{1}{N}\sum_{p=1}^{N}S^{\tau}(x^p)\Bigr)\overset{\mathcal{D}}{\to}\mathcal{N}(0,\sigma_\tau^2)
\end{align*} 

as the variance $\sigma_\tau^2
= \mathbb{E}_{x \sim \mu}[S^\tau(x)^2] - (\mathbb{E}_{x \sim \mu}[S^\tau(x)])^2$ is always finite; in effect for any path $x$, the product $S^{\tau}(x)S^{\tau}(x)$ can always be expressed as a finite sum of higher-order terms of $S(x)$ \citep[Thm. 1]{chevyrev2016primer}.

\section{RELATED WORK}\label{sec:related_work}
Recently, there has been an increased interest in extending regression algorithms to the case where inputs are sets of numerical arrays \citep{hamelijnck2019multi,law2018variational,musicant2007supervised,wagstaff2008multiple,skianis2019rep}. Here we highlight the previous work most closely related to our approach.
\paragraph{Deep learning techniques} DeepSets \citep{zaheer2017deep} are examples of neural networks designed to process each item of a set individually, aggregate the outputs by means of well-designed operations (similar to pooling functions) and feed the aggregated output to a second neural network to carry out the regression. However, these models depend on a large number of parameters and results may largely vary with the choice of architecture and activation functions \citep{wagstaff2019limitations}. 

\paragraph{Kernel-based techniques} In the setting of \acrshort{DR}, elements of a set are viewed as samples from an underlying probability distribution \citep{szabo2016learning, law2017bayesian, muandet2012learning, flaxman2015machine, smola2007hilbert}. This framework can be intuitively summarized as a two-step procedure. Firstly, a probability measure $\mu$ is mapped to a point in an RKHS $\mathcal{H}_1$ by means of a \textit{kernel mean embedding} $\Phi: \mu \to \int_{x \in \mathcal{X}}k_1(\cdot,x)\mu(dx)$, where $k_1: \mathcal{X} \times \mathcal{X} \to  \mathbb{R}$ is the associated reproducing kernel. Secondly, the regression is finalized by approximating a function $F:\mathcal{H}_1\to\mathbb{R}$ via a minimization of the form $F \approx \argmin_{g \in \mathcal{H}_2}\sum_{i=1}^M\mathcal{L}(y^i, g\circ\Phi(\mu^i))$, where $\mathcal{L}$ is a loss function, resulting in a procedure involving a second kernel $k_2:\mathcal{H}_1 \times \mathcal{H}_1 \to \mathbb{R}$. Despite the theoretical guarantees of these methods \citep{szabo2016learning}, the feature map $k_1(\cdot, x)$ acting on the support $\mathcal{X}$ is rarely provided explicitly, especially in the setting of non-standard input spaces $\mathcal{X} \not\subset \mathbb{R}^d$, requiring manual adaptations to make the data compatible with standard kernels. 
In \Cref{sec:experiments} we denote by DR-$k_1$ the models produced by choosing $k_2$ to be a Gaussian-type kernel.

\paragraph{The signature method}
The signature method consists in using the terms of the signature as features to solve supervised learning problems on time-series, with successful applications for detection of bipolar disorder \citep{arribas2018signature} and human action recognition \citep{yang2017leveraging} to name a few. The signature features have been used to construct neural network layers \citep{bonnier2019deep,graham2013sparse} in deep architectures. To the best of our knowledge, we are the first to use signatures in the context of \acrshort{DR}.

\section{EXPERIMENTS}\label{sec:experiments}

We benchmark our feature-based (\acrshort{SES}) and kernel-based (\acrshort{KES}) methods  against DeepSets and the existing kernel-based \gls{DR} techniques  discussed in \Cref{sec:related_work} on various simulated and real-world examples from physics, mathematical finance and agricultural science. With these examples, we show the ability of our methods to handle challenging situations where only a few number of labelled groups of multivariate time-series are available. We consider the kernel-based techniques DR-$k_1$ with $k_1\in\{\mathrm{RBF},\mathrm{Matern32},\mathrm{GA}\}$, where GA refers to the Global Alignment kernel for time-series from \citet{cuturi2007kernel}. For \acrshort{KES} and DR-$k_1$ we perform Kernel Ridge Regression, whilst for \acrshort{SES} we use Lasso Regression. All models are run $5$ times and we report the mean and standard deviation of the predictive mean squared error (MSE). Other metrics are reported in the Appendix. The hyperparameters of \acrshort{KES}, \acrshort{SES} and DR-$k_1$ are selected by cross-validation via a grid search on the training set of each run. Additional details about hyperparameters search and model architecture, as well as the code to reproduce the experiments can be found in the supplementary material.

   
   
\subsection{A defective electronic device}\label{ssec:circuits}
We start with a toy example to show the robustness of our methods to irregularly sampled time-series.
\begin{figure}[h]
 \begin{center}
    \includegraphics[width=0.99\columnwidth]{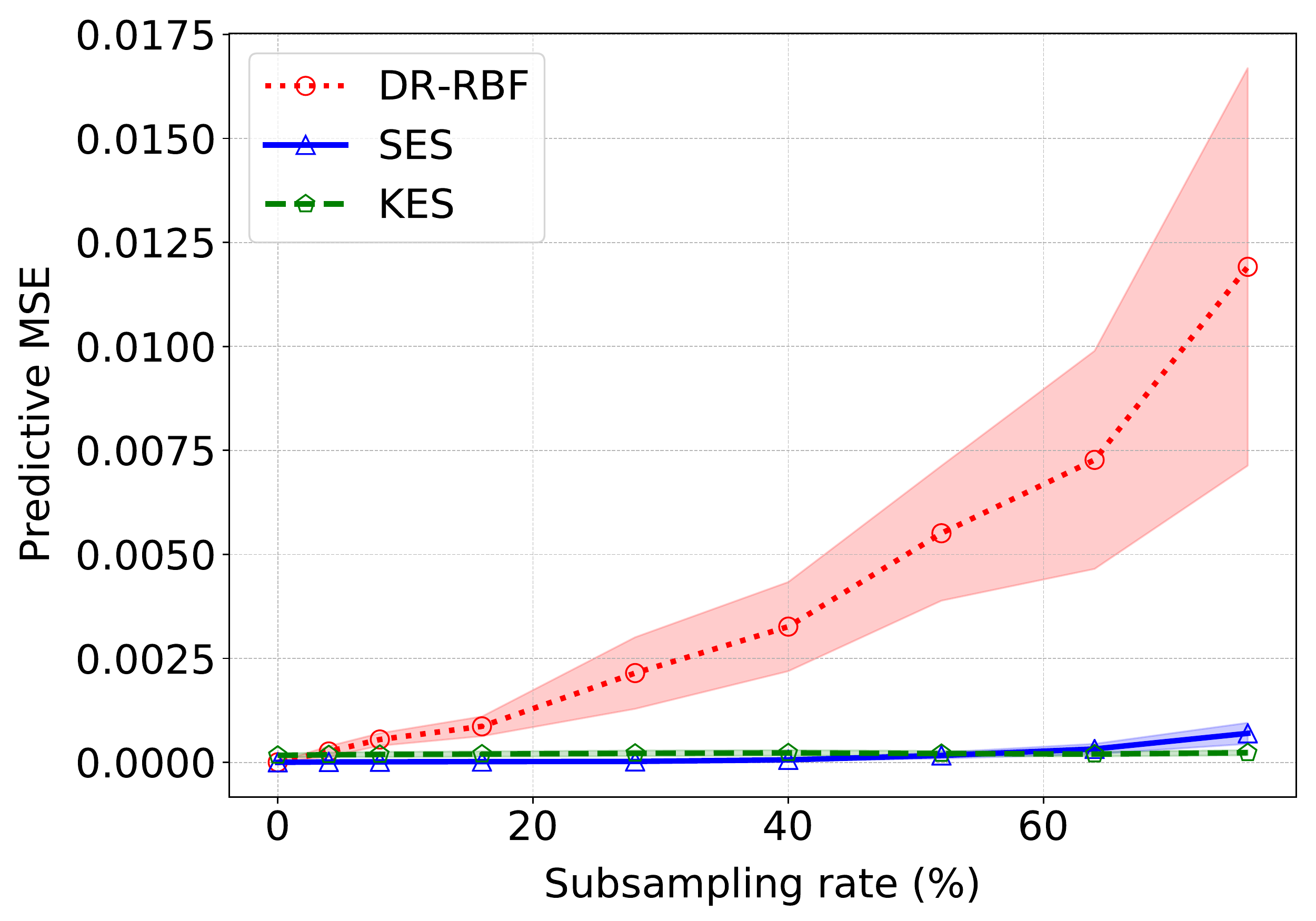}
  \end{center}
  \caption{Predictive MSE at various subsampling rates for $M=50$ circuits and $N=15$ devices. The shaded area indicates the standard deviation.}
  \label{fig:defective_device}
\end{figure}%
For this, we propose to infer the phase $\varphi$ of an electronic circuit from multiple recordings of its voltage $v^{\varphi}(t)=\sin(\omega t)$ and current $i^{\varphi}(t)=\sin(\omega t-\varphi)$. The data consists of $M$ simulated circuits with phases $\{\varphi^i\}_{i=1}^M$ selected uniformly at random from $[\pi/8, \pi/2]$. Each circuit is attached to $N$ measuring devices recording the two sine waves over $20$ periods at a frequency $25$ points per period. We then randomly subsample the data at rates ranging from $0\%$ to $75\%$ independently for each defective device. As shown in \Cref{fig:defective_device}, the predictive performances of DR-RBF drastically deteriorate when the subsampling rate increases, whilst results for \acrshort{KES} and \acrshort{SES} remain roughly unchanged. 
\subsection{Inferring the temperature of an ideal gas}\label{ssec:particles}


\begin{table}[h]
\caption{Ideal gas dataset. Radii of all particles composing the simulated gases: $r_1=3.5\cdot10^{-1}(V/N)^3$ (few collisions) and  $r_2=6.5\cdot10^{-1}(V/N)^3$ (many collisions).}
\begin{center}
\begin{tabular}{l c c} 
 \toprule
   \multirowcell{2}[0pt][l]{Model} &  \multicolumn{2}{c}{Predictive MSE [$\times 10^{-2}$]}\\ 
    \cmidrule(lr){2-3} 
    & $r_1$  & $r_2>r_1$  \\ \midrule
    DeepSets & 8.69~(3.74) & 5.61~(0.91) \\
    DR-RBF & 3.08~(0.39) &  4.36~(0.64)\\
     DR-Matern32 & 3.54~(0.48) &  4.12~(0.39)\\
    DR-GA & 2.85~(0.43) &  3.69~(0.36)\\
    \cdashline{1-3}\noalign{\vskip 0.5ex}
      \acrshort{KES} &\textbf{1.31}~(0.34) &  \textbf{0.08}~(0.02) \\
    \acrshort{SES} & \textbf{1.26}~(0.23)&  \textbf{0.09}~(0.03) \\ \bottomrule
\end{tabular}%
\label{table:particles}
\end{center}
\end{table}

The thermodynamic properties of an \textit{ideal gas} of $N$ particles inside a $3$-d box of volume $V$ ($3$ cm\raise0.5ex\hbox{3}) can be described in terms of the temperature $T$ (K), the pressure $P$ (Pa) and the total energy $U$ (J) via the two equations of state $PV=Nk_BT$ and $U=c_VNk_BT$, where $k_B$ is the \textit{Boltzmann constant} \citep{adkins1983equilibrium}, and $c_V$ the heat capacity. The large-scale behaviour of the gas can be related to the trajectories of the individual particles (through their \textit{momentum = mass $\times$ velocity}) by the equation $U=\frac{1}{2}\sum_{p=1}^Nm_p |\vv{v_p}|^2$. The complexity of the large-scale dynamics of the gas depends on $T$ (see \Cref{fig:ideal_gas}) as well as on the radius of the particles. For a fixed $T$, the larger the radius the higher the chance of collision between the particles. We simulate $M=50$ different gases of $N=20$ particles each by randomly initialising all velocities and letting particles evolve at constant speed.\footnote{We assume \citep{idealGas} that the environment is frictionless, and that particles are not subject to other forces such as gravity. We make use of python code from \url{https://github.com/labay11/ideal-gas-simulation}.} The task is to learn $T$ (sampled uniformly at random from $[1,1\,000]$) from the set of trajectories traced by the particles in the gas. In Table \ref{table:particles} we report the results of two experiments, one where particles have a small radius (few collisions) and another where they have a bigger radius (many collisions). The performance of DR-$k_1$ is comparable to the ones of \acrshort{KES} and \acrshort{SES} in the simpler setting. However, in the presence of a high number of collisions our models become more informative to retrieve the global temperature from local trajectories, whilst the performance of DR-$k_1$ drops with the increase in system-complexity. With a total number of $MN=1\,000$ time-series of dimension $d=7$ (after path augmentation discussed in \Cref{ssec:irregular_sampling} and in the Appendix), \acrshort{KES} runs in $50$ seconds, three times faster than \acrshort{SES} on a $128$ cores CPU.

\subsection{Parameter estimation in a pricing model}\label{ssec:finance}

 \begin{table}[h]
 \caption{Predictive MSE (standard deviation) on the rough volatility dataset. $N$ is the number of rough volatility trajectories and $(M,d,\ell)=(50,2,200)$.}
 \begin{center}
 \resizebox{1\columnwidth}{!}{%
 \begin{tabular}{l ccc}
  \toprule
   \multirowcell{2}[0pt][l]{Model} &  \multicolumn{3}{c}{Predictive MSE [$\times 10^{-3}$]}\\ 
     \cmidrule(lr){2-4} 
     & N=20 & N=50 & N=100 \\ \midrule
        DeepSets &74.43~(47.57) & 74.07~(49.15) & 74.03~(47.12) \\
     DR-RBF & 52.25~(11.20) & 58.71~(19.05) & 44.30~(7.12) \\
   
     DR-Matern32 & 48.62~(10.30) &
     54.91~(12.02) & 32.99~(5.08)\\
   
       DR-GA & 3.17~(1.59) & 2.45~(2.73) & 0.70~(0.42)\\
     \cdashline{1-4}\noalign{\vskip 0.5ex}
          \acrshort{KES}& \textbf{1.41}~(0.40) & \textbf{0.30}~(0.07) & \textbf{0.16}~(0.03)  \\
    \acrshort{SES}& \textbf{1.49}~(0.39)  & \textbf{0.33}~(0.12)& \textbf{0.21}~(0.05)\\ \bottomrule
 \end{tabular}%
 }
 \label{table:rough-vol}
 \end{center}
 \end{table}
 
Financial practitioners often model asset prices via an SDE of the form $dP_t = \mu_tdt + \sigma_tdW_t$, where $\mu_t$ is a drift term, $W_t$ is a $1$-d Brownian motion (BM) and $\sigma_t$ is the volatility process \citep{arribas2020sigsdes}. This setting is often too simple to match the volatility observed in the market, especially since the advent of electronic trading \citep{gatheral2018volatility}. Instead, we model the (rough) volatility process as $\sigma_t = \exp\{P_t\}$ where $dP_t = -a(P_t-m)dt + \nu dW_t^H$ is a \textit{fractional Ornstein-Uhlenbeck} (fOU) process, with $a, \nu, m$ {\footnotesize $\geq$} $0$. The fOU is driven by a \textit{fractional Brownian Motion} (fBM) $W_t^H$ of \textit{Hurst exponent} $H \in (0,1)$, governing the regularity of the trajectories \citep{decreusefond1999stochastic}.\footnote{We note that sample-paths of fBM are not in $\mathcal{C}([0,T], \mathbb{R})$ but we can assume that the interpolations obtained from market high-frequency data provide a sufficiently refined approximation of the underlying process.} In line with the findings in \citet{gatheral2018volatility} we choose $H=0.2$ and tackle the task of estimating the mean-reversion parameter $a$ from simulated sample-paths of $\sigma_t$. We consider $50$ mean-reversion values $\{a^i\}_{i=1}^{50}$ chosen uniformly at random from $[10^{-6}, 1]$. Each $a^i$ is regressed on a collection of $N=20,50,100$ (time-augmented) trajectories $\{\hat{\sigma}_t^{i,p}\}_{p=1}^{N}$ of length $200$. As shown in Table \ref{table:rough-vol}, \acrshort{KES} and \acrshort{SES} systematically yield the best MSE among all compared models. Moreover, the performance of \acrshort{KES} and \acrshort{SES} progressively improves with the number of time series in each group, in accordance to the remark at the end of \Cref{sec:methods}, whilst this pattern is not observed for DR-RBF, DR-Matern32, and DeepSets. Both \acrshort{KES} and \acrshort{SES} yield comparable performances. However, whilst the running time of \acrshort{SES} remains stable ($\approx 1~\mathrm{min}$) when $MN$ increases from $1\,000$ to $5\,000$, the running time of \acrshort{KES} increases from $\approx 1~\mathrm{min}$ to $15~\mathrm{min}$ (on $128$ cores).


\subsection{Crop yield prediction from GLDAS data}\label{ssec:crops}

Finally, we evaluate our methods on a crop yield prediction task. The challenge consists in predicting the yield of wheat crops over a region from the longitudinal measurements of climatic variables recorded across different locations of the region. We use the publicly available Eurostat dataset containing the total annual regional yield of wheat crops in mainland France---divided in $22$ administrative regions---from $2015$ to $2017$. The climatic measurements (temperature, soil humidity and precipitation) are extracted from the GLDAS database \citep{doi:10.1175/BAMS-85-3-381}, are recorded every $6$ hours at a spatial resolution of $0.25\si{\degree}\times 0.25\si{\degree}$, and their number varies across regions.\footnote{http://ec.europa.eu/eurostat/data/database} We further subsample at random $50\%$ of the measurements. \acrshort{SES} and \acrshort{KES} are the two methods which improve the most against the baseline which consists in predicting the average yield on the train set (Table \ref{table:crops}). 

 \begin{table}[h]
\caption{MSE and MAPE (mean absolute percentage error) on the Eurostat/GLDAS dataset}
\begin{center}
\begin{tabular}{l cc} 
    \toprule
    Model  &  MSE &  MAPE \\ 
   \midrule
    Baseline & 2.38~(0.60) & 23.31~(4.42) \\ 
    \cdashline{1-3}\noalign{\vskip 0.5ex}
    DeepSets & 2.67~(1.02) & 22.88~(4.99)  \\
    DR-RBF & 0.82~(0.22)& 13.18~(2.52) \\
    DR-Matern32 & 0.82~(0.23) & 13.18~(2.53)\\
    DR-GA & 0.72~(0.19) &12.55~(1.74)   \\
    \cdashline{1-3}\noalign{\vskip 0.5ex}
    \acrshort{KES} & 0.65~(0.18) & 12.34~(2.32)   \\
   \acrshort{SES}& \textbf{0.62}~(0.10) & \textbf{10.98}~(1.12)  \\
   \bottomrule
\label{table:crops}
\end{tabular}
\end{center}
\end{table}%

\section{CONCLUSION}
We have developed two novel techniques for \emph{distribution regression on sequential data}, a task largely ignored in the previous literature. In the first technique, we introduce the pathwise expected signature and construct a universal feature map for probability measures on paths. In the second technique, we  define a universal kernel based on the expected signature. We have shown the robustness of our proposed methodologies to irregularly sampled multivariate time-series, achieving state-of-the-art performances on various \acrshort{DR} problems for sequential data. Future work will focus on developing algorithms to handle simultaneously a large number of groups of high-dimensional time-series. 

\subsubsection*{Acknowledgments}
We deeply thank Dr Thomas Cass and Dr Ilya Chevyrev for the very helpful discussions. ML was supported by the OxWaSP CDT under the EPSRC grant EP/L016710/1. CS was supported by the EPSRC grant EP/R513295/1. ML and TD were supported by the Alan Turing Institute under the EPSRC grant EP/N510129/1. CS and TL were supported by the Alan Turing Institute under the EPSRC grant EP/N510129/1.

\bibliographystyle{apalike}
\bibliography{AISTATS_draft}

\appendix 
\onecolumn
\section{PROOFS}

In this section, we prove that the expected signature $\mathbb{E}S$ is weakly continuous (\Cref{sec:weak-continuity-ES}), and that the pathwise expected signature $\Phi$ is injective and weakly continuous (\Cref{sec:continuity-PES}).

Recall that in the main paper we consider a compact subset of paths $\mathcal{X} \subset \mathcal{C}(I,E)$, where $I$ is a closed interval and $E$ is a Banach space of dimension $d$ (possibly infinite, but countable). We will denote by $\mathcal{P}(\mathcal{X})$ the set of Borel probability measures on $\mathcal{X}$ and by $S(\mathcal{X}) \subset \mathcal{T}(E)$ the image of $\mathcal{X}$ by the signature $S: \mathcal{C}(I, E) \to \mathcal{T}(E)$.

As shown in \cite[Section 3]{chevyrev2018signature}, if $E$ is a Hilbert space with inner product $\langle \cdot, \cdot \rangle_E$, then for any $k\geq 1$ the following bilinear form defines an inner product on $E^{\otimes k}$
\begin{equation}
    \big\langle e_{i_1} \otimes \ldots \otimes e_{i_k}, e_{j_1} \otimes \ldots \otimes e_{j_k} \big\rangle_{E^{\otimes k}} = \prod_{r=1}^k \delta_{i_r,j_r}, \hspace{0.5cm} \delta_{ij} =
    \begin{cases}
            1, &         \text{if } i=j,\\
            0, &         \text{if } i\neq j.
    \end{cases}
\end{equation}
which extends by linearity to an inner product $\langle A, B \rangle_{\mathcal{T}(E)} = \sum_{k \geq 0} \langle A_k, B_k \rangle_{E^{\otimes k}}$ on $\mathcal{T}(E)$ that thus becomes also a Hilbert space.

\subsection{Weak continuity of the expected signature}\label{sec:weak-continuity-ES}

\begin{definition}\label{def:weak}
A sequence of probability measures $\mu_n \in \mathcal{P}(\mathcal{X})$ \textit{converges weakly} to $\mu$ if for every $f \in C_b(\mathcal{X}, \mathbb{R})$ we have $\int_\mathcal{X}fd\mu_n \to \int_\mathcal{X}fd\mu$ as $n \to \infty$, where $C_b(\mathcal{X}, \mathbb{R})$ is the space of real-valued continuous bounded functions on $\mathcal{X}$.
\end{definition}

\begin{remark}
Since $\mathcal{X}$ is a compact metric space, we can drop the word "bounded" in \Cref{def:weak}.
\end{remark}

\begin{definition}
Given two probability measures $\mu, \nu \in \mathcal{P}(\mathcal{X})$, the Wasserstein-1 distance is defined as follows 
\begin{equation}
    W_1(\mu, \nu) = \inf_{\gamma \in \Pi(\mu, \nu)} \int_{x,y \in \mathcal{X}}\norm{x-y}_{Lip} d\gamma(x,y)
\end{equation}
where the infimum is taken over all possible couplings of $\mu$ and $\nu$.
\end{definition}

\begin{lemma}\cite[Theorem 5.3]{chevyrev2018signature}
The signature $S:\mathcal{C}(I,E) \to \mathcal{T}(E)$ is injective.\footnote{Up to tree-like equivalence (see \cite[appendix B]{chevyrev2018signature} for a definition and detailed discussion).}
\end{lemma}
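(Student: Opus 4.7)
The plan is to follow the classical Hambly--Lyons strategy and reduce the whole problem to a statement about a single path with trivial signature. Given $x, y \in \mathcal{C}(I,E)$ with $S(x) = S(y)$, I would form the concatenation $z = x \star \overleftarrow{y}$, where $\overleftarrow{y}$ denotes $y$ traversed in reverse. Chen's identity, which gives the multiplicative behaviour of the signature under concatenation, together with the easy observation that $S(\overleftarrow{y}) = S(y)^{-1}$ in the group-like elements of $\mathcal{T}(E)$, yields
\begin{equation}
S(z) \;=\; S(x) \otimes S(\overleftarrow{y}) \;=\; S(x) \otimes S(y)^{-1} \;=\; \mathbf{1},
\end{equation}
the unit of $\mathcal{T}(E)$. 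Hence injectivity up to tree-like equivalence reduces to showing that any $z \in \mathcal{C}(I,E)$ with $S(z) = \mathbf{1}$ is tree-like, because in that case the concatenation $x \star \overleftarrow{y}$ being tree-like is equivalent to $x$ and $y$ lying in the same tree-like equivalence class. On the quotient space used throughout the main text (tree-like paths removed, see footnote in \Cref{sec:background}), this precisely says $x = y$.

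The main obstacle is therefore the characterisation of paths with trivial signature. The heuristic is that the coordinate iterated integrals $\{S(\cdot)^{(i_1,\ldots,i_k)}\}$ form an algebra under the shuffle product which is rich enough to detect any genuine displacement of the path, so vanishing of \emph{every} coordinate of $S(z)$ forces $z$ to cancel itself at every scale, i.e.\ to be tree-like. Making this rigorous is the hard analytic content of the theorem: one combines the shuffle-algebra structure of iterated integrals with a careful measure-theoretic argument on parametrisations, in the spirit of \cite{hambly2010uniqueness}, or alternatively with the rough-path / ODE-uniqueness approach formalised in \cite[Theorem 5.3]{chevyrev2018signature}. I would not attempt to reproduce this step: in the present paper the lemma is only needed as a black-box input to the weak-continuity and injectivity proofs for $\mathbb{E}S$ and $\Phi$ that follow in the appendix, so the cleanest approach is to invoke \cite[Theorem 5.3]{chevyrev2018signature} directly and proceed.
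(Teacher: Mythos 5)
Your proposal is correct and is consistent with the paper, which states this lemma purely as a citation of \cite[Theorem 5.3]{chevyrev2018signature} and offers no proof of its own. Your sketch of the standard reduction---Chen's identity plus path reversal to reduce injectivity to the statement that trivial-signature paths are tree-like---is the right strategy, and you correctly identify the tree-like characterisation as the hard analytic step that must be taken from \cite{hambly2010uniqueness} or \cite{chevyrev2018signature}, exactly as the paper does.
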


\begin{lemma}\cite[Corollary 5.5]{chevyrev2016characteristic}\label{thm:sig-cont}
The signature $S:\mathcal{C}(I,E) \to \mathcal{T}(E)$ is continuous w.r.t. $\norm{\cdot}_{Lip}$.
\end{lemma}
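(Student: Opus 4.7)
The plan is to establish continuity level-by-level and then upgrade to continuity into the full Hilbert space $\mathcal{T}(E)$ via a dominated-convergence argument that exploits the factorial decay of signature coefficients. Fix a sequence of paths $x_n \to x$ in $\|\cdot\|_{Lip}$. The first observation is that Lipschitz convergence is strong: on the compact interval $I$, $x_n \to x$ uniformly and also $\|x_n - x\|_{1\text{-var}} \to 0$, since $\|y\|_{1\text{-var}} \leq (T-a)\|y\|_{Lip}$ for any Lipschitz path $y$. In particular the sequence $(\|x_n\|_{1\text{-var}})$ is uniformly bounded by some constant $L$, and $\|x\|_{1\text{-var}} \leq L$ as well.

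The main step is to prove, for each fixed $k \geq 1$ and multi-index $(i_1,\ldots,i_k)$, that
\begin{equation*}
    S(x_n)^{(i_1,\ldots,i_k)} \longrightarrow S(x)^{(i_1,\ldots,i_k)}.
\end{equation*}
I would proceed by induction on $k$ using the classical Young/Riemann--Stieltjes estimate
\begin{equation*}
    \left| \int_a^T f \, dg - \int_a^T \tilde f \, d\tilde g \right| \leq \|f-\tilde f\|_\infty \, \|g\|_{1\text{-var}} + \|\tilde f\|_\infty \, \|g - \tilde g\|_{1\text{-var}},
\end{equation*}
applied iteratively to the definition of $S(x)^{(i_1,\ldots,i_k)}$ as the integral of $S(x)^{(i_1,\ldots,i_{k-1})}$ along the $i_k$-th coordinate. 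The inductive hypothesis controls the uniform distance between the $(k-1)$-th iterated integrals of $x_n$ and $x$, while the base case is direct from the uniform/1-variation convergence of $x_n$ to $x$. Summing the finitely many coordinates at level $k$ yields $\|S(x_n)_k - S(x)_k\|_{E^{\otimes k}} \to 0$.

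The final step is to pass from pointwise convergence of each level to convergence in $\mathcal{T}(E)$. By the factorial decay estimate $\|S(y)_k\|_{E^{\otimes k}} \leq \|y\|_{1\text{-var}}^k / k!$, we obtain the uniform domination
\begin{equation*}
    \|S(x_n)_k - S(x)_k\|_{E^{\otimes k}}^2 \leq \left( \frac{2 L^k}{k!} \right)^2,
\end{equation*}
and the right-hand side is summable in $k$. Dominated convergence for series then allows us to interchange the limit in $n$ and the sum over $k$, giving
\begin{equation*}
    \|S(x_n) - S(x)\|_{\mathcal{T}(E)}^2 = \sum_{k\geq 0} \|S(x_n)_k - S(x)_k\|_{E^{\otimes k}}^2 \longrightarrow 0.
\end{equation*}

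The main technical obstacle I anticipate is the level-by-level continuity step: one has to be careful that the iterated Riemann--Stieltjes estimates produce bounds depending only on $\|x_n - x\|_{Lip}$ (or $\|x_n - x\|_{1\text{-var}}$) and on the uniform bound $L$, with no hidden blow-up in $k$ beyond a factor that the factorial decay can absorb. Once the inductive estimate is written carefully, the dominated-convergence step is routine. Note that injectivity of $S$ (stated in the previous lemma in the appendix) is not required for this continuity result, and the argument works verbatim whether $E$ is finite- or (countably) infinite-dimensional.
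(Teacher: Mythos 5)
The paper does not actually prove this lemma: it is imported wholesale from \citet[Cor.~5.5]{chevyrev2016characteristic} and used as a black box in the proof of weak continuity of $\mathbb{E}S$. So there is no in-paper argument to match; what you have written is a self-contained proof, and it is essentially correct and follows the standard route (it amounts to re-deriving the classical local Lipschitz estimate $\norm{S(x)_k - S(y)_k}_{E^{\otimes k}} \le \tfrac{L^{k-1}}{(k-1)!}\norm{x-y}_{1\text{-var}}$ for paths of $1$-variation at most $L$, and then summing using the factorial decay). The three ingredients — (i) $\norm{\cdot}_{Lip}$-convergence implies uniform and $1$-variation convergence with a uniform $1$-variation bound, (ii) the telescoped Riemann--Stieltjes estimate propagated by induction on the level $k$ (keeping the inductive hypothesis as \emph{uniform} convergence in the upper limit of integration, which you do), and (iii) Tannery/dominated convergence over $k$ using $\norm{S(y)_k} \le L^k/k!$ — are all sound, and you are right that injectivity plays no role here.

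One small inconsistency to fix: you claim the argument works verbatim for countably infinite-dimensional $E$, but your level-$k$ step sums over the coordinates of $E^{\otimes k}$, of which there are infinitely many in that case, so the passage from coefficient-wise convergence to $\norm{S(x_n)_k - S(x)_k}_{E^{\otimes k}} \to 0$ is not automatic. The clean repair is to avoid coordinates altogether: write $S(x)_k - S(y)_k$ as a telescoping sum of $k$ iterated integrals, each containing exactly one factor $d(x-y)$, and bound each term by $\norm{x-y}_{1\text{-var}}\, L^{k-1}/(k-1)!$ directly in the tensor norm (which is a cross-norm on $E^{\otimes k}$). This gives the level-wise estimate uniformly in the dimension of $E$ and feeds into your dominated-convergence step unchanged. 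For the finite-dimensional case $E = \mathbb{R}^d$ actually used in the paper, your argument is complete as written.
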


\begin{lemma}\cite[Theorem 5.6]{chevyrev2018signature}\label{thm:es-injective}
The expected signature $\mathbb{E}S : \mathcal{P}(\mathcal{X}) \to \mathcal{T}(E)$ is injective.\footnote{This result was firstly proved in \cite{fawcett2002problems} for probability measures supported on compact subsets of $\mathcal{C}(I,E)$, which is enough for this paper. It was also proved in a more abstract setting in \cite{chevyrev2016characteristic}. The authors of \cite{chevyrev2018signature} introduce a normalization that is not needed in case of compact supports, as they mention in \cite[(I) - page 2]{chevyrev2018signature}}
\end{lemma}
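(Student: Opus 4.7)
The plan is to prove injectivity of $\mathbb{E}S$ by a Stone-Weierstrass argument: I will show that the linear span of the coordinate functions $x \mapsto S(x)^J$ is a point-separating subalgebra of $C(\mathcal{X},\mathbb{R})$ containing the constants, hence uniformly dense. Since the knowledge of $\mathbb{E}S(\mu)$ amounts exactly to the knowledge of all integrals $\int_\mathcal{X} S(x)^J \, d\mu$, uniform density will force $\int f\, d\mu = \int f\, d\nu$ for every $f \in C(\mathcal{X},\mathbb{R})$ as soon as $\mathbb{E}S(\mu)=\mathbb{E}S(\nu)$, at which point the Riesz representation theorem concludes.

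The first step I would execute is to invoke the classical \emph{shuffle identity} for iterated Riemann-Stieltjes integrals: for any multi-indices $I$ and $J$ and any $x \in \mathcal{X}$,
\begin{equation}
S(x)^{I} \cdot S(x)^{J} \;=\; \sum_{K \in \mathrm{sh}(I,J)} S(x)^{K},
\end{equation}
where $\mathrm{sh}(I,J)$ denotes the multiset of all shuffles of the two words $I$ and $J$. The significance of this identity is that the real linear span $\mathcal{A}$ of the family $\{x \mapsto S(x)^J\}_J$ is automatically closed under pointwise multiplication, so $\mathcal{A}$ is a subalgebra of the space of real-valued functions on $\mathcal{X}$. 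Because the empty multi-index gives $S(x)^{()} = 1$, the algebra $\mathcal{A}$ also contains the constants.

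Next I would verify the remaining Stone-Weierstrass hypotheses. Each coordinate $x \mapsto S(x)^J$ is continuous on $\mathcal{X}$ by continuity of the signature with respect to $\norm{\cdot}_{Lip}$ (the lemma on continuity of $S$ recalled above), and continuous functions on the compact set $\mathcal{X}$ are bounded, so in particular $\mathcal{A} \subset C(\mathcal{X},\mathbb{R})$ and every $S(x)^J$ is $\mu$-integrable for any $\mu \in \mathcal{P}(\mathcal{X})$. Injectivity of the signature map $S$ on paths, valid here because tree-like paths were explicitly removed from $\mathcal{C}(I,E)$ in the paper's setup, ensures that for any two distinct $x,y \in \mathcal{X}$ there exists at least one multi-index $J$ with $S(x)^J \neq S(y)^J$, so $\mathcal{A}$ separates points. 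Stone-Weierstrass then gives that $\mathcal{A}$ is uniformly dense in $C(\mathcal{X},\mathbb{R})$.

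To conclude, suppose $\mathbb{E}S(\mu)=\mathbb{E}S(\nu)$. Matching coordinates yields $\int_\mathcal{X} S(x)^J \, d\mu(x) = \int_\mathcal{X} S(x)^J \, d\nu(x)$ for every multi-index $J$, and by linearity the same equality extends to all $f \in \mathcal{A}$. Uniform density of $\mathcal{A}$ in $C(\mathcal{X},\mathbb{R})$ then propagates this to all continuous $f$, and the Riesz representation theorem for Borel probability measures on the compact metric space $\mathcal{X}$ forces $\mu = \nu$. The step I expect to require the most care is the integrability/boundedness of the signature coordinates on $\mathcal{X}$, which is what unlocks the whole scheme; this is delivered for free by compactness plus continuity of $S$, but without it the shuffle identity alone would be insufficient. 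The secondary delicate issue, the tree-like equivalence caveat in the injectivity of $S$, is already handled by the paper's footnote excluding tree-like paths from the domain.
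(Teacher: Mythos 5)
Your argument is correct, but note that the paper does not actually prove this lemma: it is imported by citation (to \citet{chevyrev2018signature}, with the footnote crediting \citet{fawcett2002problems} for the compactly supported case used here). What you have written is, in substance, the classical proof of that cited result: the shuffle identity makes the linear span of the coordinate functionals $x \mapsto S(x)^J$ a unital subalgebra of $C(\mathcal{X},\mathbb{R})$; injectivity and continuity of $S$ (both of which the paper also records as lemmas in its appendix) give point separation and containment in $C(\mathcal{X},\mathbb{R})$; Stone--Weierstrass plus the uniqueness part of Riesz representation then finish. All the Stone--Weierstrass hypotheses are available exactly because $\mathcal{X}$ is assumed compact, and you correctly identify that compactness is what delivers boundedness of the signature coordinates for free --- this is precisely why the normalization introduced in \citet{chevyrev2018signature} for non-compact supports is unnecessary here, as the paper's own footnote remarks. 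The one point handled at the same (informal) level of rigor as the paper is the tree-like caveat: strictly speaking, injectivity of $S$ holds only up to tree-like equivalence, and merely deleting tree-like paths from $\mathcal{C}(I,E)$ does not by itself make $S$ injective on what remains; but since you invoke the same injectivity lemma the paper states, with the same caveat, your proof is on equal footing with the paper's setup. What your self-contained argument buys over the paper's citation is transparency about exactly which hypotheses (compactness, injectivity of $S$, the shuffle algebra structure) carry the result.
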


\begin{theorem}
The expected signature $\mathbb{E}S : \mathcal{P}(\mathcal{X}) \to \mathcal{T}(E)$ is weakly continuous.
\end{theorem}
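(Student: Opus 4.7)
The plan is to fix a weakly convergent sequence $\mu_n \to \mu$ in $\mathcal{P}(\mathcal{X})$ and show that $\mathbb{E}S(\mu_n) \to \mathbb{E}S(\mu)$ in the norm topology of $\mathcal{T}(E)$. Since $\mathcal{T}(E)$ decomposes as $\bigoplus_{k\ge 0} E^{\otimes k}$ with Hilbert norm $\|A\|_{\mathcal{T}(E)}^2 = \sum_k \|A_k\|_{E^{\otimes k}}^2$, I will proceed level-by-level and then reassemble the pieces using a summable dominating bound.

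First, fix $k \geq 0$ and a multi-index $(i_1,\ldots,i_k) \in \{1,\ldots,d\}^k$. By \Cref{thm:sig-cont}, $S$ is continuous from $(\mathcal{C}(I,E), \|\cdot\|_{Lip})$ into $\mathcal{T}(E)$, so in particular the coordinate function $x \mapsto S(x)^{(i_1,\ldots,i_k)}$ is real-valued, continuous, and (by compactness of $\mathcal{X}$) bounded on $\mathcal{X}$. The definition of weak convergence then yields
\[
    \mathbb{E}S(\mu_n)^{(i_1,\ldots,i_k)} = \int_\mathcal{X} S(x)^{(i_1,\ldots,i_k)} \mu_n(dx) \longrightarrow \int_\mathcal{X} S(x)^{(i_1,\ldots,i_k)} \mu(dx) = \mathbb{E}S(\mu)^{(i_1,\ldots,i_k)}.
\]
In the finite-dimensional setting $E=\mathbb{R}^d$ used for all applications in the paper, coordinate-wise convergence in the canonical basis of the finite-dimensional space $E^{\otimes k}$ immediately upgrades to norm convergence, giving $\|\mathbb{E}S(\mu_n)_k - \mathbb{E}S(\mu)_k\|_{E^{\otimes k}} \to 0$ for every $k$.

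The second step is to promote level-wise convergence to joint convergence in $\mathcal{T}(E)$. The key ingredient is the classical factorial decay of signatures (already invoked in \Cref{ssec:truncation} via \cite[Prop.~2.2]{lyons2007differential}): for any path $x$ with $\|x\|_{Lip} \leq L$, one has $\|S(x)_k\|_{E^{\otimes k}} \leq L^k/k!$. Because $\mathcal{X}$ is compact in $(\mathcal{C}(I,E), \|\cdot\|_{Lip})$ it is bounded, so a single constant $L$ works uniformly over $\mathcal{X}$. Jensen's inequality (applied coordinate-wise and then summed) yields $\|\mathbb{E}S(\nu)_k\|_{E^{\otimes k}} \leq L^k/k!$ uniformly in $\nu \in \mathcal{P}(\mathcal{X})$, so
\[
    \|\mathbb{E}S(\mu_n)_k - \mathbb{E}S(\mu)_k\|_{E^{\otimes k}}^2 \leq \Bigl(\tfrac{2L^k}{k!}\Bigr)^2,
\]
which is summable in $k$ and independent of $n$. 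Dominated convergence for series then lets me exchange limit and summation in $\|\mathbb{E}S(\mu_n) - \mathbb{E}S(\mu)\|_{\mathcal{T}(E)}^2 = \sum_k \|\mathbb{E}S(\mu_n)_k - \mathbb{E}S(\mu)_k\|^2$, concluding that the whole norm tends to zero.

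The main obstacle is precisely this passage from level-wise to global norm convergence: $\mathcal{T}(E)$ is infinite-dimensional, and in general pointwise or coordinate-wise convergence is strictly weaker than norm convergence. The factorial bound from rough path theory furnishes exactly the uniform dominating sequence needed to rescue the interchange of limit and sum. A secondary technicality arises only in the (non-applied) case where $E$ itself is infinite-dimensional: then each $E^{\otimes k}$ is infinite-dimensional and the first step delivers only weak convergence in $E^{\otimes k}$, but this is easily strengthened to norm convergence by approximating the compact image $S_k(\mathcal{X}) \subset E^{\otimes k}$ uniformly by its projections onto finite-dimensional subspaces. For the applications of interest, only the clean finite-dimensional argument is needed.
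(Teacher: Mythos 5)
Your proof is correct and follows essentially the same route as the paper's: coordinate-wise convergence of the iterated integrals from weak convergence plus continuity of $S$, upgraded to norm convergence in $\mathcal{T}(E)$ via the factorial decay of \citet[Prop.~2.2]{lyons2007differential}. You merely spell out more explicitly the dominated-convergence step that the paper leaves implicit, which is a welcome addition rather than a deviation.
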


\begin{proof}
Consider a sequence $\{\mu_n\}_{n \in \mathbb{N}}$ of probability measures on $\mathcal{P}(\mathcal{X})$ converging weakly to a measure $\mu \in \mathcal{P}(\mathcal{X})$. By \Cref{thm:sig-cont} the signature $S: x \mapsto S(x)$ is continuous w.r.t. $\norm{\cdot}_{Lip}$. Hence, by definition of weak-convergence (and because $\mathcal{X}$ is compact), for any $k>0$ and any multi-index $(i_1, \ldots, i_k) \in \{1, \ldots, d\}^k$ it follows that $\int_{x \in \mathcal{X}}S(x)^{(i_1, \ldots, i_k)}\mu_n(dx) \to \int_{x \in \mathcal{X}}S(x)^{(i_1, \ldots, i_k)}\mu(dx)$. The factorial decay given by \cite[Proposition 2.2]{lyons2007differential} yields  $\int_{x \in \mathcal{X}}S(x)\mu_n(dx) \to \int_{x \in \mathcal{X}}S(x)\mu(dx)$ in the topology induced by $\langle \cdot, \cdot \rangle_{\mathcal{T}(E)}$.
\end{proof}

\subsection{Injectivity and weak continuity of the pathwise expected signature}\label{sec:continuity-PES}
\begin{theorem}\cite[Theorem 3.7]{lyons2007differential}\label{thm:extension}
Let $x \in \mathcal{C}(I,E)$ and recall the definition of the projection $\Pi_t: x \mapsto x_{|_{[a,t]}}$. Then, the $\mathcal{T}(E)$-valued path defined by
\begin{equation}
    S_{path}(x): t \mapsto S \circ \Pi_t(x)
\end{equation}
is Lipschitz continuous. Furthermore the map $x \mapsto S_{path}(x)$ is continuous w.r.t. $\norm{\cdot}_{Lip}$.
\end{theorem}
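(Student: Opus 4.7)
The plan is to prove both claims by exploiting Chen's identity for the signature together with the factorial decay of signature terms, treating the Hilbert norm on $\mathcal{T}(E)$ carefully.

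For the first assertion (Lipschitz continuity of $S_{path}(x)$), I would fix $a \leq s \leq t \leq T$ and apply Chen's identity
\begin{equation*}
S(\Pi_t(x)) = S(\Pi_s(x)) \otimes S(x_{|_{[s,t]}}),
\end{equation*}
so that $S_{path}(x)(t) - S_{path}(x)(s) = S(\Pi_s(x)) \otimes \bigl(S(x_{|_{[s,t]}}) - \mathbf{1}\bigr)$, where $\mathbf{1}$ denotes the unit in $\mathcal{T}(E)$. Expanding degree-by-degree gives
\begin{equation*}
\bigl[S_{path}(x)(t) - S_{path}(x)(s)\bigr]_k \;=\; \sum_{j=0}^{k-1} S(\Pi_s(x))_j \otimes S(x_{|_{[s,t]}})_{k-j}.
\end{equation*}
Then I would invoke the classical factorial estimate $\|S(y)_k\|_{E^{\otimes k}} \leq L_y^{\,k} |I_y|^{k}/k!$ for Lipschitz paths $y$ (as recalled in \Cref{ssec:truncation}), apply it to $\Pi_s(x)$ and to $x_{|_{[s,t]}}$, and sum. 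Since $|t-s| \leq T-a$ is bounded and $\mathrm{Lip}(x)$ is finite, the resulting series is dominated by $C(x)\,|t-s|\,e^{\mathrm{Lip}(x)(T-a)}$, yielding the Lipschitz bound in $\mathcal{T}(E)$.

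For the second assertion (continuity of $x \mapsto S_{path}(x)$ in the Lipschitz norm), I would restrict to a bounded ball in $\mathcal{C}(I,E)$ so that Lipschitz constants are uniformly controlled by some $R$, and write the difference of increments via Chen as
\begin{equation*}
\bigl[S(\Pi_s(x)) - S(\Pi_s(y))\bigr] \otimes \bigl(S(x_{|_{[s,t]}}) - \mathbf{1}\bigr) \;+\; S(\Pi_s(y)) \otimes \bigl[S(x_{|_{[s,t]}}) - S(y_{|_{[s,t]}})\bigr].
\end{equation*}
The first term is handled by the continuity of the signature in $\|\cdot\|_{Lip}$ (\Cref{thm:sig-cont}) combined with the factorial smallness of $S(x_{|_{[s,t]}}) - \mathbf{1}$, which contributes a factor of order $|t-s|$. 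The second term is handled symmetrically: $S(\Pi_s(y))$ is bounded, and a short-interval version of \Cref{thm:sig-cont} gives $\|S(x_{|_{[s,t]}}) - S(y_{|_{[s,t]}})\|_{\mathcal{T}(E)} \leq C(R)\,|t-s|\,\|x - y\|_{Lip}$. Dividing by $|t-s|$ and taking the sup, together with the observation that $S_{path}(x)(a) = S_{path}(y)(a) = \mathbf{1}$ so the boundary term vanishes, yields $\|S_{path}(x) - S_{path}(y)\|_{Lip} \to 0$ as $\|x-y\|_{Lip} \to 0$.

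The main obstacle is that the Hilbert inner product on $\mathcal{T}(E)$ is not a priori submultiplicative under $\otimes$, so estimates of the form $\|A \otimes B\| \leq \|A\|\,\|B\|$ cannot be used out of the box; the degree-by-degree expansion above avoids this by keeping each $E^{\otimes k}$ component separate, where the tensor product \emph{is} compatible with the induced Hilbert norm. A secondary obstacle is the short-path Lipschitz comparison $\|S(x_{|_{[s,t]}}) - S(y_{|_{[s,t]}})\|_{\mathcal{T}(E)} \lesssim |t-s|\,\|x-y\|_{Lip}$, which is proved by differentiating the iterated-integral representation of the signature and carefully controlling each multilinear term; this is essentially the ingredient used to derive \Cref{thm:sig-cont} in the first place, refined to reveal the explicit factor $|t-s|$.
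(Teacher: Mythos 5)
The paper does not actually prove this statement: it is imported verbatim as \cite[Theorem 3.7]{lyons2007differential}, so there is no internal proof to compare against. Your argument is a legitimate self-contained reconstruction for the Lipschitz ($p=1$) case, and it is essentially the standard route: Chen's identity to factor the increment as $S(\Pi_s(x))\otimes\bigl(S(x_{|_{[s,t]}})-\mathbf{1}\bigr)$, the factorial bound $\norm{S(y)_k}\leq \norm{y}_{1\text{-var}}^k/k!$ applied to the short piece to extract the factor $|t-s|$, and the cross-norm property of the Hilbert--Schmidt tensor norm (your degree-by-degree expansion is exactly the right way to sidestep the fact that submultiplicativity is not free on all of $\mathcal{T}(E)$; summing the $\ell^1$-type bounds and dominating the $\ell^2$ norm closes the loop). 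The decomposition of the difference of increments into the two Chen terms for the second assertion is also correct, and the observation that the boundary term vanishes since $S_{path}(x)(a)=\mathbf{1}$ is needed and present. What your proof buys over the paper's citation is that it avoids the full rough-path extension machinery of Lyons--Caruana--L\'evy, which is stated for $p$-variation; for Lipschitz paths the elementary Riemann--Stieltjes estimates suffice.

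The one place where you lean on an unproved ingredient is the quantitative local estimate $\norm{S(x_{|_{[s,t]}})-S(y_{|_{[s,t]}})}_{\mathcal{T}(E)}\leq C(R)\,|t-s|\,\norm{x-y}_{Lip}$, together with its uniform-in-$s$ analogue on $[a,s]$ needed for the first Chen term. This is genuinely the crux of the second assertion and is stronger than the qualitative continuity recorded in the appendix's Lemma on continuity of $S$; you should state it as a lemma with the standard proof (write $S(x)_k-S(y)_k$ as a telescoping sum of $k$ iterated integrals each containing one factor $d(x-y)$, bound that factor by $\norm{x-y}_{Lip}|t-s|$ and the remaining $k-1$ factors by $(R|t-s|)^{k-1}/(k-1)!$, then sum over $k$). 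With that lemma made explicit the argument is complete.
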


\begin{theorem}
The pathwise expected signature $\Phi: \mathcal{P}(\mathcal{X}) \to \mathcal{C}(I, \mathcal{T}(E))$ is injective.\footnote{For any $\mu \in \mathcal{P}(\mathcal{X})$ the path $\Phi(\mu) \in \mathcal{C}(I,\mathcal{T}(E))$. Indeed $\Phi(\mu)$ is a continuous path because $x$, $\Pi_t$, $S$ and $\Phi$ are all continuous and the composition of continuous functions is continuous. The Lipschitzianity comes from the fact that $\norm{\Phi(\mu)}_{Lip} \leq \mu(\mathcal{X}) \sup_{x \in \mathcal{X}}\norm{S_{path}(x)}_{Lip} < +\infty$ by \Cref{thm:extension}.}
\end{theorem}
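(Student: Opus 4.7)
The strategy is to reduce injectivity of $\Phi$ to injectivity of the ordinary expected signature $\mathbb{E}S$, which has already been recorded as \Cref{thm:es-injective}. The observation is that $\Phi(\mu)$ is a path defined on all of $I=[a,T]$, so it contains, in particular, the value at the right endpoint, and at that endpoint the projection $\Pi_T$ is just the identity on $\mathcal{C}(I,E)$.

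First, I would fix two probability measures $\mu,\nu \in \mathcal{P}(\mathcal{X})$ and assume $\Phi(\mu) = \Phi(\nu)$ as elements of $\mathcal{C}(I,\mathcal{T}(E))$. Evaluating both sides at the endpoint $t=T$ gives
\begin{equation*}
\mathbb{E}_{x\sim\mu}\bigl[S(\Pi_T(x))\bigr] \;=\; \mathbb{E}_{x\sim\nu}\bigl[S(\Pi_T(x))\bigr].
\end{equation*}
Since $\Pi_T(x)=x$ for every $x \in \mathcal{C}(I,E)$, the left-hand side is exactly $\mathbb{E}S(\mu)$ and the right-hand side is $\mathbb{E}S(\nu)$, so $\mathbb{E}S(\mu)=\mathbb{E}S(\nu)$ in $\mathcal{T}(E)$.

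Now I would invoke \Cref{thm:es-injective}, which states that $\mathbb{E}S:\mathcal{P}(\mathcal{X})\to\mathcal{T}(E)$ is injective on measures with compact support (and $\mathcal{X}$ is compact by hypothesis). This immediately yields $\mu=\nu$, proving injectivity of $\Phi$. Equivalently, one can phrase the same argument via the pushforward identity from the footnote in the main text, $\Phi(\mu)(t)=\mathbb{E}S(\Pi_t\#\mu)$: at $t=T$ the pushforward $\Pi_T\#\mu$ coincides with $\mu$ (because $\Pi_T$ is the identity on $\mathcal{X}$), so injectivity at the endpoint is a direct consequence of injectivity of $\mathbb{E}S$.

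There is essentially no technical obstacle here: the proof is a one-line reduction to an already cited result. The only point worth checking is the endpoint identity $\Pi_T=\mathrm{id}_{\mathcal{X}}$, which is immediate from the definition of $\Pi_t$ as restriction to $[a,t]$ when $t=T$. Note that the argument also shows the stronger statement that the map $\mu \mapsto \Phi(\mu)(T)$ is already injective on its own, so the injectivity of $\Phi$ does not really exploit the pathwise nature of its output beyond taking a single slice.
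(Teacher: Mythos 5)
Your proposal is correct and is essentially identical to the paper's own proof: both evaluate $\Phi(\mu)=\Phi(\nu)$ at the endpoint $t=T$, note that $\Pi_T$ is the identity so this gives $\mathbb{E}S(\mu)=\mathbb{E}S(\nu)$, and conclude by the injectivity of the expected signature (\Cref{thm:es-injective}). No gaps.
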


\begin{proof}
Let $\mu, \nu \in \mathcal{P}(\mathcal{X})$ be two probability measures. If $\Phi(\mu) = \Phi(\nu)$, then for any $t \in I$, $\mathbb{E}_{x \sim \mu}[S\circ \Pi_t(x)] = \mathbb{E}_{y \sim \nu}[S\circ \Pi_t(y)]$. In particular, for $t=T$, $\mathbb{E}S(\mu) = \mathbb{E}_{x \sim \mu}[S(x)] = \mathbb{E}_{y \sim \nu}[S(y)] = \mathbb{E}S(\nu)$. The result follows from the injectivity of the expected signature $\mathbb{E}S$ (\Cref{thm:es-injective}). 
\end{proof}


\begin{theorem}\label{thm:continuity}
The pathwise expected signature $\Phi:\mathcal{P}(\mathcal{X}) \to \mathcal{C}(I, \mathcal{T}(E))$ is weakly continuous.
\end{theorem}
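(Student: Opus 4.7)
The plan is to realize $\Phi(\mu)$ as a single Bochner integral taking values in the Banach space $Y := (\mathcal{C}(I,\mathcal{T}(E)), \|\cdot\|_{Lip})$, and then to invoke a general principle: for a continuous function $g: K \to Y$ on a compact metric space $K$, the map $\mu \mapsto \int_K g\, d\mu$ is continuous from $(\mathcal{P}(K),\text{weak})$ to $(Y,\|\cdot\|_Y)$.

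First I would set up the integral representation. By \Cref{thm:extension} the map $S_{path}:\mathcal{X} \to Y$ sending $x$ to the Lipschitz path $t\mapsto S(\Pi_t(x))$ is continuous (both sides equipped with $\|\cdot\|_{Lip}$). Since $\mathcal{X}$ is compact, $S_{path}(\mathcal{X})$ is compact in $Y$, so in particular $S_{path}$ is bounded and strongly measurable, and the Bochner integral $\int_{\mathcal{X}} S_{path}(x)\,d\mu(x) \in Y$ is well-defined for every $\mu \in \mathcal{P}(\mathcal{X})$. Since for each fixed $t \in I$ the evaluation map $\mathrm{ev}_t: Y \to \mathcal{T}(E)$ is a continuous linear functional on each coordinate of $\mathcal{T}(E)$, it commutes with Bochner integration, which identifies $\int_{\mathcal{X}} S_{path}(x)\,d\mu(x)$ pointwise with $t \mapsto \mathbb{E}_{x\sim\mu}[S(\Pi_t(x))] = \Phi(\mu)(t)$.

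Next I would prove the auxiliary lemma: for $K$ compact metric, $Y$ a Banach space, $g:K\to Y$ continuous and $\mu_n \to \mu$ weakly in $\mathcal{P}(K)$, one has $\int g\,d\mu_n \to \int g\,d\mu$ in $\|\cdot\|_Y$. Given $\epsilon>0$, uniform continuity of $g$ (a consequence of compactness of $K$) yields $\delta>0$ with $d(x,x')<\delta \Rightarrow \|g(x)-g(x')\|_Y <\epsilon$. Cover $K$ by finitely many open balls $B(x_i,\delta/2)$, $i=1,\dots,N$, pick a continuous partition of unity $\{\phi_i\}$ subordinate to this cover, and define the finite-rank simple function $\tilde{g}(x) = \sum_{i=1}^N \phi_i(x)\, g(x_i)$. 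Using $\sum_i \phi_i \equiv 1$ and $\mathrm{supp}(\phi_i) \subset B(x_i,\delta)$ one obtains $\|g(x)-\tilde{g}(x)\|_Y \leq \epsilon$ uniformly in $x$. Since each $\phi_i$ is continuous, weak convergence implies $\int \phi_i\, d\mu_n \to \int \phi_i\, d\mu$, hence $\int \tilde{g}\, d\mu_n \to \int \tilde{g}\, d\mu$ in the finite-dimensional subspace $\mathrm{span}\{g(x_1),\dots,g(x_N)\}\subset Y$. A triangle inequality then bounds $\|\int g\,d\mu_n - \int g\,d\mu\|_Y$ by $2\epsilon + o(1)$, yielding the claim.

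Finally, applying this lemma with $K=\mathcal{X}$, $Y=(\mathcal{C}(I,\mathcal{T}(E)), \|\cdot\|_{Lip})$ and $g=S_{path}$ yields $\Phi(\mu_n) \to \Phi(\mu)$ in $\|\cdot\|_{Lip}$ whenever $\mu_n \to \mu$ weakly, which is exactly the weak continuity of $\Phi$. The main obstacle is the auxiliary lemma, where one must be careful that the partition-of-unity approximation is performed in the target Banach space $Y$ with its actual Lipschitz norm (rather than just pointwise or in the supremum norm); once this is set up, everything reduces to the standard interplay between weak convergence of measures and integration of continuous Banach-valued functions on a compact domain.
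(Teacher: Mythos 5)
Your proof is correct, but it takes a genuinely different route from the paper's. Both arguments hinge on the same two ingredients---the continuity of $S_{path}$ (\Cref{thm:extension}) and the compactness of $\mathcal{X}$---but they diverge afterwards. The paper pushes $\mu_n$ and $\mu$ forward by $S_{path}$ onto the compact set $S_{path}(\mathcal{X}) \subset \mathcal{C}(I,\mathcal{T}(E))$, invokes the equivalence of weak convergence and Wasserstein-1 convergence on compact supports (Villani, Thm.~6.8), and then bounds $\norm{\mathbb{E}[S_{path}\#\mu_n]-\mathbb{E}[S_{path}\#\mu]}_{Lip}$ by the $W_1$ distance via a Jensen-type coupling inequality. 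You instead prove from scratch the general fact that for a continuous map $g$ from a compact metric space into a Banach space $Y$, the Bochner integral $\mu \mapsto \int g\,d\mu$ is weak-to-norm continuous, via uniform continuity, a continuous partition of unity, and a finite-rank approximation $\tilde{g}=\sum_i \phi_i(\cdot)g(x_i)$ whose integrals converge by testing weak convergence against each $\phi_i$. Your route is more elementary and self-contained (no optimal transport), and it also sidesteps a notational awkwardness in the paper, where the Jensen step is written as a norm of a ``difference of measures'' rather than explicitly in terms of a coupling; the paper's route is shorter given the cited results and makes the quantitative link $\norm{\Phi(\mu)-\Phi(\nu)}_{Lip}\leq W_1(S_{path}\#\mu, S_{path}\#\nu)$ explicit. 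Your preliminary step identifying $\Phi(\mu)$ with the $Y$-valued Bochner integral of $S_{path}$ (justified by strong measurability on a compact image and by commuting the bounded evaluation operators $\mathrm{ev}_t$ with the integral) is sound and is implicitly assumed in the paper as well.
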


\begin{proof}
Let $\{\mu_n\}_{n \in \mathbb{N}}$ be a sequence in $\mathcal{P}(\mathcal{X})$ converging weakly to $\mu \in \mathcal{P}(\mathcal{X})$. As $S_{path}$ is continuous (\Cref{thm:extension}), it follows, by the \textit{continuous mapping theorem}, that $S_{path}\#\mu_n \to S_{path}\#\mu$ weakly, where $S_{path}\#\mu$ is the pushforward measure of $\mu$ by $S_{path}$. Given that $S_{path}$ is continuous and $\mathcal{X}$ is compact, it follows that the image $S_{path}(\mathcal{X})$ is a compact subset of the Banach space $\mathcal{C}(I, \mathcal{T}(E))$. 
By \cite[Theorem 6.8]{villani2008optimal} weak convergence of probability measures on compact supports is equivalent to convergence in \textit{Wasserstein-1 distance}. By \textit{Jensen's inequality} $\norm{\mathbb{E}[S_{path}\#\mu_n] - \mathbb{E}[S_{path}\#\mu]}_{Lip} \leq \mathbb{E}[\norm{S_{path}\#\mu_n - S_{path}\#\mu}_{Lip}]$. Taking the infimum over all couplings $\gamma \in \Pi(S_{path}\#\mu_n, S_{path}\#\mu)$ on the right-hand-side of the previous equation we obtain $\norm{\mathbb{E}[S_{path}\#\mu_n] - \mathbb{E}[S_{path}\#\mu]}_{Lip} \leq W_1(S_{path}\#\mu_n, S_{path}\#\mu) \to 0$, which yields the convergence $\mathbb{E}[S_{path}\#\mu_n] \to \mathbb{E}[S_{path}\#\mu]$ in $\norm{\cdot}_{Lip}$ over $\mathcal{C}(I,\mathcal{T}(E))$. Noting that $\mathbb{E}[S_{path}\#\mu] = \Phi(\mu)$ concludes the proof.
\end{proof}

\section{EXPERIMENTAL DETAILS}

In our experiments we benchmark \acrshort{KES} and \acrshort{SES} against DeepSets and DR-$k_1$ where $k_1\in\{\mathrm{RBF},\mathrm{Matern32},\mathrm{GA}\}$. Both \acrshort{KES} and \acrshort{SES} do not take into account the length of the input time-series. Apart from DR-GA, all other baselines are designed to operate on vectorial data. Therefore, in order to deploy them in the setting of \acrshort{DR} on sequential data, manual pre-processing (such as padding) is required. In the next section we describe how we turn discrete time-series into continuous paths on which the signature operates. 
\subsection{Transforming discrete time-series into continuous paths}
Consider a $d$-dimensional time-series of the form $\mathbf{x}=\{(t_1,\mathbf{x}_{1}), \ldots, (t_{\ell},\mathbf{x}_{\ell})\}$ with time-stamps $t_1$ {\small$\leq$} $\ldots$ {\small$\leq$}  $t_{\ell}$ and values $\mathbf{x}_k\in\mathbb{R}^d$, and the continuous path $x$ obtained by linearly interpolating between the points $\mathbf{x}_1,\cdots,\mathbf{x}_{\ell}$. The signature (truncated at level $n$) of $x$ can be computed explicitly with existing Python packages \cite{reizenstein2018iisignature,esig,signatory}, does not depend on the time-stamps $(t_1,\ldots,t_{\ell_{i,j}})$, and produces $(d^{n+1}-1)/(d-1)$ terms when $d>1$. When $d=1$ the signature is trivial since $S^{\leq n}(x)=(1,(x_{t_\ell}-x_{t_1}), \frac{1}{2}(x_{t_\ell}-x_{t_1})^2,\cdots,\frac{1}{n!}(x_{t_\ell}-x_{t_1})^n)$. As mentioned in \Cref{ssec:irregular_sampling} we can simply augment the paths with a monotonous coordinate, such that $\hat{x}:t\mapsto (t,x_t)$, where $t\in[a,T]$, effectively reintroducing a time parametrization. Another way to augment the state space of the data and obtain additional signature terms is the \textit{lead-lag} transformation (see \Cref{def:lead-lag}) which turns a $1$-d data stream into a $2$-d path. For example if the data stream is $\{1,5,3\}$ one obtains the $2$-d continuous path $\hat{x}:t\mapsto(x_t^{(lead)},x_t^{(lag)})$ where $x^{(lead)}$ and $x^{(lag)}$ are the linear interpolations of $\{1,5,5,3,3\}$ and $\{1,1,5,5,3\}$ respectively. A key property of the lead-lag transform is that the difference between $S(\hat{x})^{(1,2)}$ and $S(\hat{x})^{(2,1)}$ is the quadratic variation $QV(x)=\sum_{k=1}^{\ell-1}(x_{t_{k+1}}-x_{t_k})^2$ \cite{chevyrev2016primer}. Hence, even when $d>1$, it may be of interest to lead-lag transform the coordinates of the paths for which the quadratic variation is important for the task at hand.
\begin{definition}[Lead-lag]\label{def:lead-lag} Given a sequence of points $\mathbf{x}=\{\mathbf{x}_{1},\ldots,\mathbf{x}_{\ell}\}$ in $\mathbb{R}^d$ the lead-lag transform yields two new sequences $\mathbf{x}^{(lead)}$ and $\mathbf{x}^{(lag)}$ of length $2\ell-1$ of the following form
\begin{align*}
   \mathbf{x}^{(lead)}_{p}= \left\{
    \begin{array}{ll}
        \mathbf{x}_{k} & \mathrm{if}~ p=2k-1 \\
        \mathbf{x}_{k} & \mathrm{if}~ p=2k-2.
    \end{array}
\right. &&
   \mathbf{x}^{(lag)}_{p}= \left\{
    \begin{array}{ll}
        \mathbf{x}_{k} & \mathrm{if}~ p=2k-1 \\
        \mathbf{x}_{k} & \mathrm{if}~ p=2k.
    \end{array}
\right.
\end{align*}
\end{definition}


In our experiments we add time and lead-lag all coordinates except for the first task which consists in inferring the phase of an electronic circuit (see \Cref{ssec:circuits} in the main paper).

\subsection{Implementation details}\label{ssec:implementation}
The distribution regression methods (including DR-$k_1$, \acrshort{KES} and \acrshort{SES}) are implemented on top of the Scikit-learn library \cite{scikit-learn}, whilst we use the existing codebase \url{https://github.com/manzilzaheer/DeepSets} for DeepSets.

\subsubsection{KES} 
The \acrshort{KES} algorithm relies on the signature kernel trick which is referred to as PDESolve in the main paper. In the algorithm below we outline the finite difference scheme we use for the experiments. In all the experiments presented in the main paper, the discretization level of the PDE solver is fixed to $n=0$ such that the time complexity to approximate the solution of the PDE is $\mathcal{O}(d\ell^2)$ where $\ell$ is the length of the longest data stream.  

\begin{algorithm}[H]
    \caption{PDESolve}
    \label{algo:PDESolve}
    \begin{algorithmic}[1]
        \State\textbf{Input: } two streams $\{\mathbf{x}_k\}_{k=1}^{\ell_\mathbf{x}}$, $\{\mathbf{y}_k\}_{k=1}^{\ell_\mathbf{y}}$ of dimension $d$ and discretization level $n$ (step size $=2^{-n}$)
        \State Create array $U$ to store the solution of the PDE
        \State Initialize  $U[i,:]\leftarrow 1$ for $i\in\{1,2,\ldots,2^n*(\ell_x-1)+1\}$
        \State Initialize  $U[:,j]\leftarrow 1$ for $j\in\{1,2,\ldots,2^n*(\ell_y-1)+1\}$
        \For{each $i\in\{1,2,\ldots,2^n*(\ell_\mathbf{x}-1)\}$}
         \For{each $j\in\{1,2,\ldots,2^n*(\ell_\mathbf{y}-1)\}$}
        \State $\Delta_\mathbf{x} = (\mathbf{x}_{\left\lceil i/(2^n) \right\rceil+1}-\mathbf{x}_{\left\lceil i/(2^n) \right\rceil})/2^n$
        \State $\Delta_\mathbf{y} = (\mathbf{y}_{\left\lceil j/(2^n) \right\rceil+1}-\mathbf{y}_{\left\lceil j/(2^n) \right\rceil})/2^n$
        \State $U[i+1,j+1] = U[i,j+1] + U[i+1,j] + (\Delta_\mathbf{x}^T\Delta_\mathbf{y}-1.)*U[i,j]$
        \EndFor
        \EndFor
        \State \textbf{Output:} The solution of the PDE at the final times $U[-1,-1]$  
    \end{algorithmic}%
\end{algorithm}

\subsubsection{SES} 

The \acrshort{SES} algorithm from the main paper relies on an algebraic property for fast computation of signatures, known as Chen's relation. Given a piecewise linear path $x = \Delta x_{t_2}\star \ldots \star \Delta x_{t_\ell}$ given by the concatenation $\star$ of individual increments $\mathbb{R}^d\ni\Delta x_{t_k}=x_{t_k}-x_{t_{k-1}}, ~k=2,\ldots,\ell$, one has $S(x) = \exp(\Delta x_{t_2}) \otimes \ldots \otimes \exp(\Delta x_{t_\ell})$, where $\exp$ denotes the tensor exponential and $\otimes$ the tensor product. Using Chen's relation, computing the signature (truncated at level $n$) of a sequence of length $\ell$ has complexity $\mathcal{O}(\ell d^n)$.

\subsubsection{Baselines}
\begin{wraptable}{r}{0.5\textwidth}
\vspace{-1\baselineskip}
\caption{Kernels $k_1$ for the kernel-based baselines. See \cite{cuturi2017soft} for the definition of $\mathrm{dtw}_{1/\gamma}$ in the GA kernel.}
    \begin{tabular}{l c}
    \toprule
       RBF  & $\exp(-\gamma^2\norm{x-x'}^2)$ \\
       Matern32 &  $(1+\sqrt{3}\gamma^2\norm{x-x'})\exp(-\sqrt{3}\gamma^2\norm{x-x'})$ \\
       GA & $\exp(-\gamma\,\mathrm{dtw}_{1/\gamma}(x,x'))$\\
       \bottomrule
    \end{tabular}%
    \vspace{-0.3cm}
\end{wraptable}
For the kernel-based baselines DR-$k_1$, we perform Kernel Ridge regression with the kernel defined by $k(\delta^i,\delta^j)=\exp{(-\sigma^2\norm{\rho(\delta^i)-\rho(\delta^j)}_{\mathcal{H}_1}^2)}$, where $\rho(\delta^i)=N_i^{-1}\sum_{p=1}^{N_i}k_{1}(\cdot,x^{i,p})$. For $k_1\in\{\mathrm{RBF},\mathrm{Matern32}\}$, if the time-series are multi-dimensional, the dimensions are stacked to form one large vector $x\in\mathbb{R}^{d\ell}$. See Table 1 for the expressions of the kernels $k_1$ used as baselines.


For DeepSets, the two neural networks are feedforward neural networks with ELU activations. We train by minimizing the mean squared error.

\subsection{Hyperparameter selection}
All models are run $5$ times. The hyperparameters of \acrshort{KES}, \acrshort{SES} and DR-$k_1$ are selected by cross-validation via a grid search on the training set ($80\%$ of the data selected at random) of each run. The range of values for each parameter is specified in \Cref{table:grid_search}. 

\begin{table}[H]
\begin{center}
\caption{Range of values for each parameter of DR-$k_1$, \acrshort{KES} and \acrshort{SES}. We denote by $\alpha$ the regularization parameter in Kernel Ridge regression and Lasso regression. The kernels parameters $\gamma$ and $\sigma$ are expressed in terms of lengthscales $\ell_1$ and $\ell_2$ such that $\gamma^2=1/(2\ell_1^2)$ and $\sigma^2=1/(2\ell_2^2)$.}
\resizebox{0.98\columnwidth}{!}{%
\begin{tabular}{l c c c c c}
 \toprule
  Model &  $\ell_1$ & $\ell_2$ & $\alpha$ & $n$ & $m$ \\ 
  \hline\\[-1.5ex]
    \text{DR-RBF} & $\{10^{-3},10^{-2},\ldots,10^{2},10^3\}$ & $\{10^{-3},10^{-2},\ldots,10^{2},10^3\}$ & $\{10^{-3},10^{-2},\ldots,10^{2},10^3\}$ & N/A & N/A  \\
      \text{DR-Matern32} & $\{10^{-3},10^{-2},\ldots,10^{2},10^3\}$ & $\{10^{-3},10^{-2},\ldots,10^{2},10^3\}$ & $\{10^{-3},10^{-2},\ldots,10^{2},10^3\}$ & N/A & N/A  \\
        \text{DR-GA} & $\{7\cdot10^{1},7\cdot10^{2}\}$ & $\{10^{-3},10^{-2},\cdots,10^{2},10^{3}\}$ & $\{10^{-3},10^{-2},\ldots,10^{2},10^3\}$ & N/A & N/A  \\
    \acrshort{KES} & N/A & $\{10^{-3},10^{-2},\ldots,10^{2},10^3\}$ & $\{10^{-3},10^{-2},\ldots,10^{2},10^3\}$ & N/A & N/A  \\
    \acrshort{SES} & N/A & N/A & $\{10^{-5}, 10^{-4},\ldots, 10^{4}, 10^{5}\}$ & $\{2,3\}$ & $\{2\}$ \\ \bottomrule
\end{tabular}%
}
\label{table:grid_search}
\end{center}
\end{table}

\section{ADDITIONAL RESULTS}\label{sec:additional_results}

\subsection{Additional performance metrics}
We report the mean absolute percentage error (MAPE) as well as the computational time on two synthetic examples (the ideal gas and the rough volatility examples). As discussed in the main paper, these two datasets represent two data regimes: in one case (the rough volatility model) there is a high number of low dimensional time-series (see \Cref{fig:rough_vol}), whist in the other case (ideal gas), there is a relatively small number of time-series with a higher state-space dimension. Apart from DeepSets (which is run on a GPU), all other models are run on a $128$ cores CPU. 
\begin{figure}[H]
\centering
\includegraphics[scale=0.4]{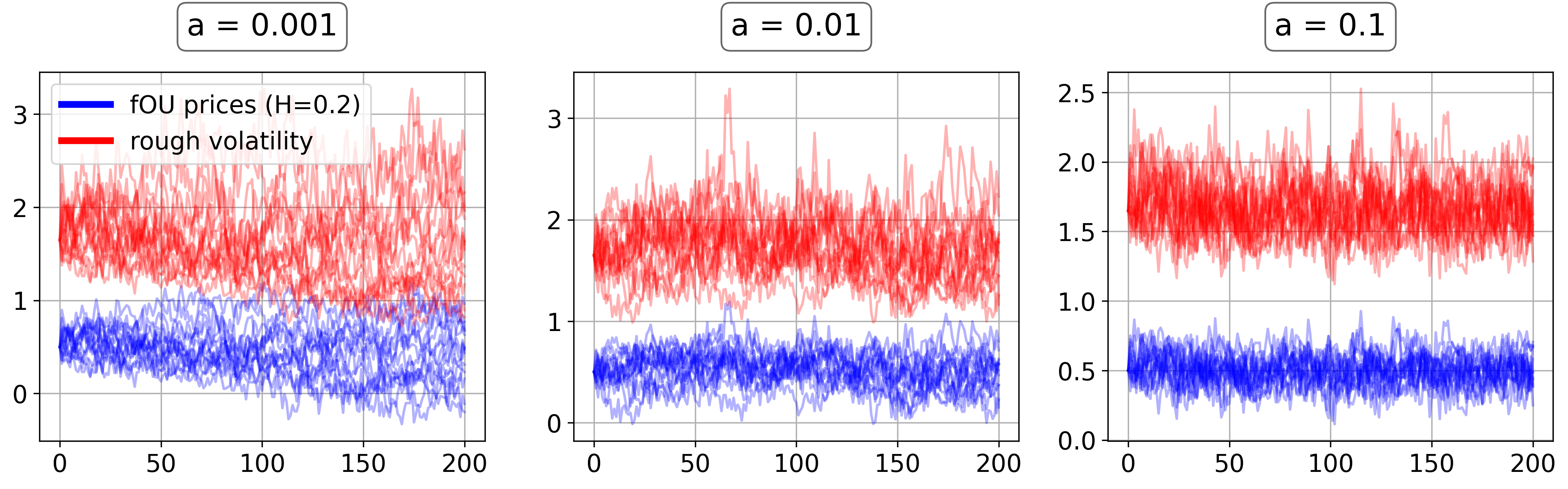}
\caption{fOU sample-paths and corresponding volatility for $3$ different mean-reversion parameters.}
\label{fig:rough_vol}
\end{figure}

\begin{table}[H]
\caption{Ideal gas example.}
\begin{center}
\begin{tabular}{l c c c } 
 \toprule
   \multirowcell{2}[0pt][l]{Model} &   \multicolumn{2}{c}{Predictive MAPE } & \multirowcell{2}[0pt][l]{Time (s)} \\ 
    \cmidrule(lr){2-3} 

     & $r_1$  & $r_2>r_1$ &\\ \midrule
    DeepSets &  82.50~(20.20) & 53.49~(13.93) & 31 \\
    DR-RBF  & 32.09~(5.78) & 41.15~(11.81) & 58\\
     DR-Matern32  & 33.79~(5.16) & 40.20~(12.45) & 55\\
    DR-GA  & 31.61~(5.60) & 39.17~(13.87) & 68\\
    \cdashline{1-4}\noalign{\vskip 0.5ex}
      \acrshort{KES} & 16.57~(4.86) & 4.20~(0.79) & 49\\
    \acrshort{SES}  & 15.75~(2.65) & 4.44~(1.36) & 120 \\ \bottomrule
\end{tabular}%
\label{table:particles2}
\end{center}
\end{table}

 \begin{table}[H]
 \caption{Rough volatility example.}
 \begin{center}
 \begin{tabular}{l ccc ccc}
  \toprule
   \multirowcell{2}[0pt][l]{Model} &  \multicolumn{3}{c}{Predictive MAPE } & \multicolumn{3}{c}{Time (min) }\\ 
     \cmidrule(lr){2-4} 
       \cmidrule(lr){5-7} 
      & N=20 & N=50 & N=100 & N=20 & N=50 & N=100\\ \midrule
        DeepSets  & 44.85~(17.80)& 44.75~(17.93)& 45.00 ~(18.21)& 1.31 & 1.86 &  2.68\\
     DR-RBF & 43.86~(13.36) & 45.54~(10.05)& 41.00~(12.98) & 0.71& 1.38& 7.50\\
   
     DR-Matern32 & 40.97~(10.81) &43.59~(9.79) & 35.35~(9.18) &  0.73 & 1.00 & 7.80 \\
   
       DR-GA  & 11.94~(7.14) &9.54~(6.85) & 5.51~(2.78)& 0.68 & 2.60 & 9.80 \\
     \cdashline{1-7}\noalign{\vskip 0.5ex}
          \acrshort{KES}&  6.12~(1.00)& 2.83~(0.49)&   2.07~(0.42) & 0.71 & 4.00 & 15.50\\
    \acrshort{SES}& 6.67~(3.35) & 3.58~(0.84)& 2.14~(0.62) & 0.60 & 0.65 &  0.78\\ \bottomrule
 \end{tabular}
 \label{table:rough-vol2}
 \end{center}
 \end{table}


\subsection{Interpretability}\label{sec:additional_exp}

\begin{wrapfigure}{r}{0.35\textwidth}
 \begin{center}
    \includegraphics[width=0.35\textwidth]{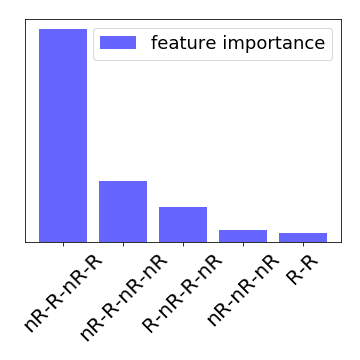}
  \end{center}
  \caption{{\footnotesize The $5$ most predictive features provided by (Lasso) \acrshort{SES} for the task of crop yield prediction.}}
    \label{fig:NDVI}
\end{wrapfigure}

When dealing with complex data-streams, cause-effect relations between the different path-coordinates might be an essential feature that one wishes to extract from the signal. Intrinsic in the definition of the signature is the concept of iterated integral of a path over an ordered set of time indices  $a<u_1<\ldots<u_k<T$. This ordering of the domain of integration, naturally captures causal dependencies between the coordinate-paths $x^{(i_1)}, \ldots , x^{(i_k)}$. 

Taking this property into account, we revisit the crop yield prediction example (see \Cref{ssec:crops} in the main paper, and \Cref{fig:crops}) to show how the iterated integrals from the signature (of the pathwise expected signature) provide interpretable predictive features, in the context of \gls{DR} with \acrshort{SES}. For this, we replace the climatic variables by two distinct multi-spectral reflectance signals: 1) near-infrared (nR) spectral band; 2) red (R) spectral band \cite{huete2002overview}. These two signals are recorded at a much lower temporal resolution than the climatic variables, and are typically used to assess the health-status of a plant or crop, classically summarized by the \textit{normalized difference vegetation index} (NDVI) \cite{huete2002overview}. To carry out this experiment, we use a publicly available dataset \cite{hubert2019time} which contains multi-spectral time-series corresponding to geo-referenced French wheat fields from $2006$ to $2017$, and consider these field-level longitudinal observations to predict regional yields (still obtained from the Eurostat database).\footnote{http://ec.europa.eu/eurostat/data/database}Instead of relying on a predefined vegetation index signal, such as the aforementionned NDVI : $t\mapsto (x_t^{nR}-x_t^{R})/(x_t^{nR}+x_t^{R})$, we use the raw signals in the form of $2$-dimensional paths $x:t\mapsto x_t=(x^{nR}_t, x^{R}_t)$ to perform a Lasso \gls{DR} with \acrshort{SES}.

\paragraph{Interpretation} Chlorophyll strongly absorbs light at wavelengths around $0.67 \mu m$ (red) and reflects strongly in green light, therefore our eyes perceive healthy vegetation as green. Healthy plants have a high reflectance in the near-infrared between $0.7$ and $1.3 \mu m$. This is primarily due to healthy internal structure of plant leaves \cite{rahman2004ndvi}. Therefore, this absorption-reflection cycle can be seen as a good indicator of the health of crops. Intuitively, the healthier the crops, the higher the crop-yield will be at the end of the season. It is clear from \Cref{fig:NDVI} that the feature in the signature that gets selected by the Lasso penalization mechanism corresponds to a double red-infrared cycle, as described above. This simple example shows how the terms of the signature are not only good predictors, but also carry a natural interpretability that can help getting a better understanding of the underlying physical phenomena.  

\begin{figure}[h]
    \centering
    \includegraphics[scale=0.4,trim={0 0.8cm 0 0},clip]{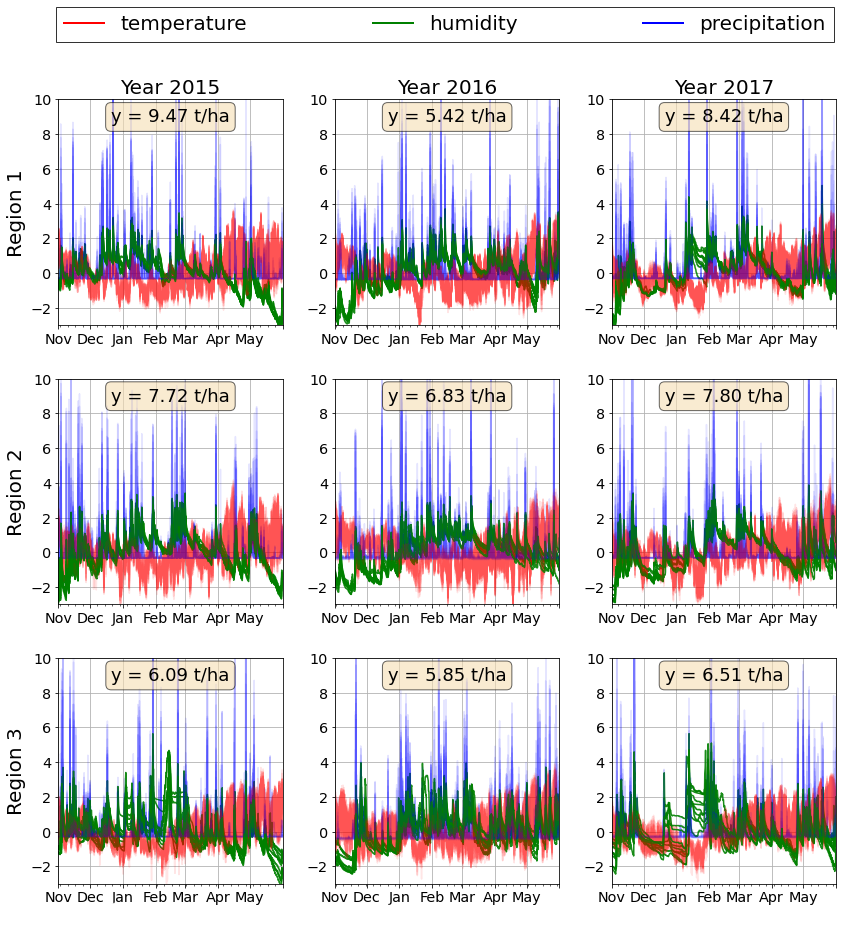}
    \caption{GLDAS/Eurostat dataset. Each plot shows the normalized time-series of temperature, humidity and precipitation, measured over $10$ different locations across a region within a year.}
    \label{fig:crops}
\end{figure}

\end{document}